\documentclass{article}

\PassOptionsToPackage{sort&compress}{natbib}
\usepackage[preprint]{corl_2026} 
\usepackage{graphicx}
\usepackage{booktabs}
\usepackage{amsmath}
\usepackage{amsthm}
\usepackage{amsfonts}
\usepackage{bm}
\usepackage{float}
\usepackage{enumitem}
\usepackage{fontawesome5}
\usepackage[most]{tcolorbox}
\usepackage{multirow}

\newtheorem{theorem}{Theorem}

\newcommand{\method}{{SLS$^2$}}
\newtcbox{\linkpill}{
  on line,
  box align=base,
  colback=gray!10,
  colframe=gray!10,
  arc=1.5mm,
  boxrule=0pt,
  left=2pt,
  right=2pt,
  top=1pt,
  bottom=1pt
}
\usepackage{wrapfig}
\definecolor{slsorange}{HTML}{D97706}
\title{\looseness-1Pixels to Proofs: Probabilistically-Safe Latent World Model Control via Parallel Conformal Robust MPC}

\author{
  Devesh Nath$^\star$, Anutam Srinivasan$^\star$, Haoran Yin$^\star$, Ruitong Jiang, Jeffrey Fang, Glen Chou\\
  Georgia Institute of Technology\\
  \texttt{\{dnath7, asrinivasan350, hyin95, rjiang77, jfang301, chou\}@gatech.edu} \\
  $^\star$Equal contribution (alphabetical ordering)\\
\linkpill{\href{https://trustworthyrobotics.github.io/SLS-squared/}{\textcolor{slsorange}{\faGlobe\ Website}}}
\quad
\linkpill{\href{https://github.com/trustworthyrobotics/SLS-squared}{\textcolor{slsorange}{\faGithub\ Code}}}
\quad
\linkpill{\href{https://youtu.be/3sYYNSQqwSQ}{\textcolor{slsorange}{\faYoutube\ Video}}}
}

\begin{document}
\maketitle


\begin{abstract}
    We present \method, a framework for safe feedback motion planning from pixels using robust model predictive control (MPC) in learned latent world models. Our approach trains an action-conditioned joint-embedding world model with compact Markovian latent states, enabling efficient gradient-based trajectory optimization through learned latent dynamics. To enforce safety for the true system despite imperfect latent predictions, we inform a GPU-accelerated system level synthesis (SLS) robust MPC scheme with conformal prediction to obtain calibrated latent error bounds and robust latent-space constraint sets. We further learn and conformalize a latent constraint checker, allowing the SLS planner to impose probabilistic safety constraints during closed-loop execution. We evaluate our method on vision-based control tasks, where it improves both goal-reaching performance and safety over latent world-model and safe-planning baselines.
\end{abstract}

\keywords{robust visuomotor control, safety verification, JEPA world models} 

\begin{figure}[H]
    \centering\vspace{-8pt}
    \includegraphics[width=\linewidth]{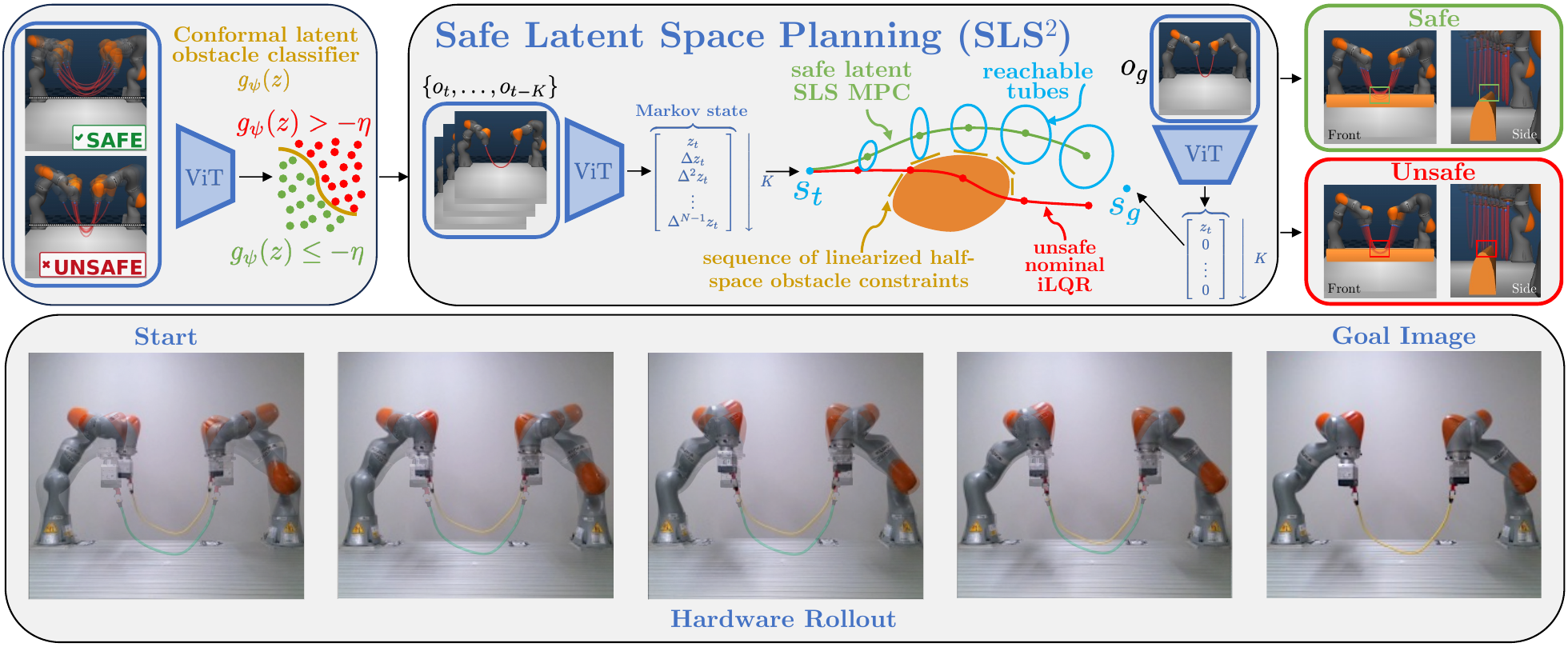}
    \vspace{-1.5em}
    \caption{\textbf{Method overview.} \method~enables robustly safe, reachability-constrained latent planning directly from pixels. \textit{Top row}: We summarize the pipeline for our robust planner. \textit{Bottom row}: We show a sample rollout generated by executing control actions computed by the nominal gradient-based planner on which the robust planner is built. We overlay the goal pose, shown with a green rope, on the rollout to highlight the system’s progress toward the goal state.}
    \label{fig:block_diag}
\end{figure}

\vspace{-18pt}
\section{Introduction}
\vspace{-5pt}

\looseness-1Safe control is a central challenge for reliable open-world robot deployment. Most safety-critical control methods \cite{ames2019controlbarrier, mitchell2005timedependent, manchester2017control} rely on a well-defined state representation on which planning, control, and verification can be performed. However, in the real world, obtaining such a representation is itself nontrivial. This is especially true for deformable objects and tasks where the state is only implicitly defined through perception. Even when a state representation is known, estimating the state from raw observations, such as images, can require extensive offline training data with labeled correspondences between visual inputs and ground-truth states -- data that is often impractical to obtain \cite{chou2022safe, leeman2026vision, chou2023synthesizing, dean2021guaranteeing}.

\looseness-1Recent advances in representation learning and world modeling \cite{ha2018worldmodels, hafner2020dreamer} offer a promising alternative. Learned world models can infer latent state representations directly from observations and learn dynamics in these latent spaces, enabling prediction and planning without requiring manually specified state variables. These models have shown considerable potential for robotic decision-making \cite{zhou2024dinowm, hansen2024tdmpc2}, particularly in settings where explicit state estimation is difficult. However, ensuring that these latent space control policies yield safe behavior under the true system dynamics is difficult. Latent representations are often high-dimensional, their learned dynamics may be inaccurate, and task constraints expressed in the physical or observation space must be translated into the latent space. These issues complicate the use of classical safety-critical control tools in the latent space, which typically assume low-dimensional state spaces and known, calibrated disturbance bounds. Recent work has explored deep learning for approximating latent safety certificates \cite{castaneda2023distribution, nakamura2025generalizing}; however, these methods often rely on heuristic approximations and do not provide sound, calibrated reachability guarantees.

To exploit the advances of world modeling while maintaining rigorous safety assurances, we propose SLS$^2$ (\underline{S}afe \underline{L}atent \underline{S}pace \underline{S}ystem \underline{L}evel \underline{S}ynthesis), a scalable robust model predictive control (MPC) framework for safe planning in learned latent spaces based on the SLS framework \cite{anderson2019system}. Our approach introduces training objectives and optimization strategies that improve gradient-based trajectory optimization through learned latent dynamics. To be robust to prediction error in the latent dynamics, we plan with these learned models using scalable reachability-constrained robust MPC based on SLS. This is made computationally feasible through GPU parallelization, which enables efficient trajectory optimization coupled with local uncertainty propagation around the optimized plan. To enforce safety constraints, we train a constraint checker in the latent space and incorporate it within robust MPC. To enable probabilistic constraint satisfaction, we use conformal prediction (CP) to calibrate bounds on both latent dynamics error and learned constraint-checker error. This ensures that the reachable sets computed by SLS contain the true reachable latent states with high probability, enabling robust safety despite these errors. Specifically, our contributions are: 
\begin{itemize}[leftmargin=1.1em]
    \item We learn compact Markovian latent world models from pixels that are accurate under multi-step rollout and amenable to efficient gradient-based trajectory optimization.
    \item We conformalize latent dynamics errors and in-domain latent support, producing calibrated uncertainty sets for robust planning. We further learn and conformalize latent safety classifiers, enabling task constraints to be enforced directly during latent space planning.
    \item We show how to use CP-calibrated uncertainty sets in GPU-accelerated SLS to scalably perform reachability-constrained robust MPC in high-dimensional learned latent spaces.
    \item We validate nominal trajectory optimization and robust MPC with our learned models on both simulated and hardware vision-based control tasks, spanning navigation, pick-and-place, and deformable object manipulation. Our method improves safety and goal-reaching performance over world-model and safe-planning baselines.
\end{itemize}

\section{Related Work}

\paragraph{World Model}
\looseness-1World models learn predictive representations of dynamics, enabling agents to reason about future outcomes through imagined rollouts~\cite{ha2018worldmodels,hafner2019planet,hafner2020dreamer,hafner2021dreamerv2,hafner2023dreamerv3,hafner2025dreamerv4}. Early latent world models paired representation learning with reconstruction-based prediction and policy learning in imagination~\cite{hafner2019planet,hafner2020dreamer,hafner2021dreamerv2}, while TD-MPC and TD-MPC2 combine latent dynamics with online model-predictive control for continuous control~\cite{hansen2022tdmpc,hansen2024tdmpc2}. More recently, joint-embedding predictive architectures (JEPAs) have emerged as a promising approach to representation-space world modeling by predicting future embeddings rather than pixels~\cite{lecun2022path,assran2023ijepa,bardes2024vjepa,assran2025vjepa2}. In robotics and offline goal-conditioned planning, PLDM learns reward-free latent dynamics for test-time planning~\cite{sobal2025pldm}, while DINO-WM predicts future DINOv2 visual features from offline action-conditioned trajectories for zero-shot planning without reward supervision~\cite{oquab2024dinov2,zhou2024dinowm}. Large-scale generative world models and interactive simulators further demonstrate the promise of learned predictive models for controllable environments~\cite{micheli2023iris,alonso2024diamond,micheli2024deltairis,bruce2024genie}. Our work builds on LeWorldModel (Le-WM), an end-to-end JEPA world model that learns from raw pixels via next-embedding prediction with Gaussian embedding regularization and without explicitly predicting future images, then performs latent-space MPC for goal-reaching control~\cite{lewm}. We introduce training modifications that improve both predictive accuracy and gradient-based planning.

\vspace{-4pt}
\paragraph{Safe Latent Planning} 
Safety filters enforce constraints by minimally modifying candidate actions from a nominal controller. Classical approaches such as CBF-QPs, HJ reachability, and predictive safety filters offer principled safety guarantees but typically require known state representations, known dynamics, and hand-designed safe or failure sets~\cite{ames2017controlbarrier,ames2019controlbarrier,mitchell2005timedependent,fisac2019general,wabersich2021predictive,wabersich2023datadrivensafetyfilters}. Recent data-driven methods reduce this burden by learning barrier functions, output-feedback certificates, distributional safety constraints from demonstrations and offline data, or by performing uncertainty-aware model-based control with learned dynamics ~\cite{robey2020learningcbfs,lindemann2024robustoutputcbfs,kang2022lyapunovdensity,faroni2025uncertainty,cao2025optimistic,chou2021model,knuth2021planning,knuth2023statistical,suh2023fighting,dodeja2025accelerating,chua2018deep}. Closest to our setting, safe latent-space methods extend safety filtering to high-dimensional visuomotor control. In-distribution barrier functions learn CBF-like filters in latent space from safe demonstrations~\cite{castaneda2023distribution}, while latent safety filters approximate HJ reachability in learned world-model latents to reason about hard-to-specify failures~\cite{nakamura2025generalizing}. Follow-up work improves latent safety filtering with uncertainty-aware OOD detection and conformal calibration~\cite{seo2025unisafe}, smooth latent CBFs~\cite{nakamura2025latentcbf}, and latent barrier certificates learned jointly with world models~\cite{anand2025safetycertification}. Latent Policy Barrier treats the expert demonstration manifold as an implicit safety barrier and corrects actions at inference time using learned latent dynamics~\cite{sun2025latentpolicybarrierlearning}. These methods show that latent representations can enable safety reasoning from images. However, most existing methods operate as policy filters without formal guarantees, safety predictors that do not perform control \cite{mao2024safe}, or in-distribution recovery mechanisms around a \textit{given} base policy. In contrast, we directly synthesize safety-constrained feedback policies and motion plans through reachability-informed latent space control, while providing calibrated probabilistic guarantees via SLS and conformal prediction.

\begin{figure*}[t] 
    \centering
    \includegraphics[width=\linewidth, trim={7.6cm 9cm 7.5cm 9cm}, clip]{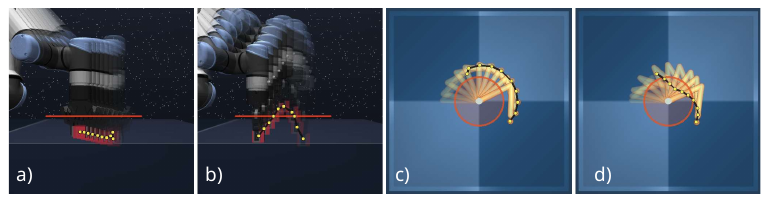}
    \vspace{-23pt}
    \caption{\textbf{Robust planning}. \textbf{Cube (a-b)} \textbf{and} \textbf{Reacher (c-d)}. \method \ maps pixels directly to robustly-safe actions via latent robust MPC, ensuring robust constraint satisfaction (red) compared to unconstrained baselines. \textbf{a, b)} The Cube task with and without \method \ enforcing the height threshold. \textbf{c, d)} The Reacher task with (c) and without (d) \method \ enforcing the angle constraint on the second joint.\vspace{-9pt}}
    \label{fig:cube_and_reacher}
\end{figure*}

\vspace{-5pt}
\section{Preliminaries and Problem Statement}
\vspace{-5pt}

We consider an unknown partially observed dynamical system described by an \textit{unknown} underlying state representation 
$x_k\in\mathcal{X}$ and control $u_k\in\mathcal{U}\subseteq\mathbb{R}^{n_u}$, and observations $o_k \in \mathcal{O}\subseteq \mathbb{R}^{n_o}$:
\begin{equation}\label{eq:true_dyn}
    x_{k+1}=g(x_k,u_k),
    \qquad
    o_k=h(x_k),
\end{equation}
where $o_k\in\mathcal{O}$ are RGB images. 
In our setting, neither the true state $x_k$ nor its
state representation $\mathcal{X}$ is available to the planner. $g$ and $h$ are also unknown; we only observe image measurements $o \in \mathcal{O}$. 

\paragraph{Problem Statement.}
We consider safe feedback motion planning from image observations using a learned latent world
model. We are given an offline dataset of image transitions
$\mathcal{D}:=\{(o_t,u_t,o_{t+1})\}$ collected from task-relevant rollouts. The true
system evolves in an unknown state space $\mathcal{X}$, while the planner only observes images $o \in \mathcal{O}$ from \eqref{eq:true_dyn}. We wish to learn a compact latent representation where prediction, planning, and
reachability-based safety verification are tractable.

\looseness-1In this paper, we train a visual encoder $\mathrm{enc}_\theta:\mathcal{O}\to\mathbb{R}^{n_z}$, with
$z_t=\mathrm{enc}_\theta(o_t)$. Using $\mathrm{enc}_\theta$, we construct a Markovian latent state
$s_t^z=[z_t,\Delta z_t,\ldots,\Delta^Kz_t]\in\mathcal{S}\subseteq\mathbb{R}^{n_s}$,
where $\Delta z_t=z_t-z_{t-1}$, from a history of encoded images. The learned latent dynamics
$f_\phi:\mathcal{S}\times\mathcal{U}\to\mathcal{S}$ predict the next Markovian state 
$\hat s_{t+1}^z=f_\phi(s_t^z,u_t)$. As $f_\phi$ can be inaccurate, we
model its prediction error as
$s_{t+1}^{z}=f_\phi(s_t^z,u_t)+e_t$, where $\mathcal{E} \subseteq \mathbb{R}^{n_s}$ is calibrated from
a held-out subset of $\mathcal{D}$ using CP to contain the true prediction error $e_t$ with a user-specified probability $1-\delta \in (0, 1)$. Specifically, we solve: 

\noindent\textbf{Problem 1.}
\textit{Learning a planning- and verification-friendly latent world model.}
Using $\mathcal{D}$, learn $\mathrm{enc}_\theta$ and $f_\phi$ such that the latent
state $s_t^z$ is predictive under actions, the learned dynamics are accurate under
multi-step rollout, differentiable, and compatible with gradient-based trajectory
optimization. Construct uncertainty sets $\mathcal{E}(s,u) \subseteq \mathbb{R}^{n_s}$ such that the prediction error $e$ at a particular $(s,u)$ satisfies $e \in \mathcal{E}(s,u)$ with probability $1-\delta$. Finally, learn a latent safety score $c_\psi:\mathbb{R}^{n_z}\to\mathbb{R}$, where larger values indicate safer states, and define the constraint score $g_\psi(s):=-c_\psi(\Pi_z s)$, where $\Pi_z$ extracts the zeroth-order latent embedding.

\noindent\textbf{Problem 2.}
\textit{Reachability-informed robust MPC in latent-space.}
Given $f_\phi$, $\mathcal{E}$, and a 
latent safe set
$\mathcal{S}_{\mathrm{safe}}:=\{s\in\mathcal{S}\mid g_\psi(s)\le-\eta\}$ for $\eta \in \mathbb{R}$,
optimize a nominal length-$T$ latent trajectory
$\bm{\zeta}:=\{\zeta_k\}_{k=0}^{T}$ and controls
$\mathbf{v}:=\{v_k\}_{k=0}^{T-1}$ satisfying $\zeta_{k+1}=f_\phi(\zeta_k,v_k)$, together with
a causal state-feedback controller
$\pi:=(\pi_0,\ldots,\pi_{T-1})$, $\pi_k:\mathcal{S}^{k+1}\to\mathcal{U}$, that stabilizes
the true latent dynamics
$s_{k+1}^{z}=f_\phi(s_k^z,\pi_k(s_{0:k}^z))+e_k$ about
$(\bm{\zeta},\mathbf{v})$, where $e_k$ is the true prediction error realized by executing on the true system \eqref{eq:true_dyn}. This MPC policy should produce reachable tubes
$\mathcal{R}_k^s$ and $\mathcal{R}_k^u$ such that, with probability at least
$1-\delta$,
$s_k^{z}\in\mathcal{R}_k^s\subseteq\mathcal{S}_{\mathrm{safe}}$ for all
$k=0,\ldots,T$ and
$\pi_k(s_{0:k}^{z})\in\mathcal{R}_k^u\subseteq\mathcal{U}$ for all
$k=0,\ldots,T-1$.

\section{Methodology}
\label{sec:methodology}
\looseness-1Our method, \method~(Fig. \ref{fig:block_diag}), trains a Markovian latent model and constraint classifier (Sec. \ref{sec:method_training}), conformalizes its uncertainty (Sec. \ref{sec:cp}), and incorporates the resulting calibrated uncertainty sets into SLS-based robust MPC for probabilistically-safe feedback motion planning (Sec. \ref{sec:sls_mpc}).

\subsection{Training World Models Friendly for Gradient-Based Robust MPC}\label{sec:method_training}

We use the given offline dataset $\mathcal{D}$ to train an action-conditioned latent world model following a Le-WM-style \cite{lewm} joint-embedding predictive architecture. Given an image observation $o_t$, an encoder $\mathrm{enc}_{\theta}$ maps the image into a compact latent embedding
$z_t = \mathrm{enc}_{\theta}(o_t)$. 
The encoder is implemented as a ViT backbone trained from scratch, followed by an MLP projector that maps the encoder output to the latent embedding space. Before encoding, image observations are resized to a fixed resolution, normalized, and organized into short temporal windows sampled from offline trajectories. Actions are normalized using the dataset mean and standard deviation. 
To model dynamics, we construct a Markov latent state by augmenting the current embedding with finite-difference latent features:
\begin{equation}
    s_t^z =
    \left[
        z_t,\;
        \Delta z_t,\;
        \ldots,\;
        \Delta^K z_t
    \right],
    \qquad
    \Delta z_t = z_t - z_{t-1},
\end{equation}
where $K$ denotes the Markov order. The latent dynamics predictor $f_{\phi}$ is an MLP that takes the current Markov latent state and action as input and predicts the next Markov latent state:
\begin{equation}\label{eq:latent_dynamics}
    \hat{s}_{t+1}^z = f_{\phi}(s_t^z, a_t).
\end{equation}
For multi-step training, the predictor is rolled out autoregressively over a horizon $H$, where each predicted latent state is fed back into the model for the next prediction step $\hat{s}_{t+h+1}^z = f_{\phi}(\hat{s}_{t+h}^z, a_{t+h})$ for $h = 0,\ldots,H-1.$ 
The main training loss is a multi-step latent prediction objective. At each prediction step, the predicted Markov latent state is compared against the target Markov latent state computed from the encoded ground-truth future observations:
\begin{equation}
    \textstyle\mathcal{L}_{\mathrm{pred}}
    =
    \frac{1}{H}
    \sum_{h=1}^{H}
    \left\|
        \hat{s}_{t+h}^z - s_{t+h}^z
    \right\|_2^2 .
\end{equation}
We follow Le-WM \cite{lewm} and prevent representational collapse via the SIGReg loss \cite{balestriero2025lejepa} and additionally encourage consecutive latent velocity vectors to align via temporal straightening \cite{wang2026straightening}:
\noindent
\begin{minipage}{0.35\textwidth}
    \begin{equation}\label{eq:sigreg}
        \mathcal{L}_{\mathrm{sig}}
        =
        \mathrm{SIGReg}\left(\{z_t\}_{t=1}^{T}\right),
    \end{equation}
\end{minipage}%
\hfill
\begin{minipage}{0.6\textwidth}
    \begin{equation}
        \textstyle\mathcal{L}_{\mathrm{str}}
        =
        \frac{1}{T-2}
        \sum_{t=2}^{T-1}
        \left(
            1 -
            \frac{
                \left(z_t - z_{t-1}\right)^{\top}
                \left(z_{t+1} - z_t\right)
            }{
                \left\|z_t - z_{t-1}\right\|_2
                \left\|z_{t+1} - z_t\right\|_2
            }
        \right).
    \end{equation}
\end{minipage}

\noindent
The full training objective is therefore
$\mathcal{L}
    =
    \mathcal{L}_{\mathrm{pred}}
    +
    \lambda_{\mathrm{sig}}\mathcal{L}_{\mathrm{sig}}
    +
    \lambda_{\mathrm{str}}\mathcal{L}_{\mathrm{str}}$.

\paragraph{Obstacle Classifier.}
For constrained tasks, we train a latent obstacle classifier to detect whether a predicted latent state violates task constraints. For each task, we build a balanced dataset of obstacle and non-obstacle observations by sampling task-relevant configurations, labeling them with the known simulator constraint, and rendering them as RGB images. This yields positive and negative constraint examples without relying on rollout failures. While the labeling rule is task-specific, the dataset has a common form across tasks,
$\mathcal{D}_{\mathrm{obs}}
    =
    \{(o_i, y_i)\}_{i=1}^{N}$, with 
    $y_i \in \{0,1\},$
where $y_i=0$ denotes a constraint-violating image and $y_i=1$ denotes a non-violating image.

We train the classifier in the latent space rather than in pixel space. Using
the frozen encoder, each image is mapped to
$z_i=\mathrm{enc}_{\theta}(o_i)$. A small MLP classifier $c_{\psi}:\mathbb{R}^{n_z}\to\mathbb{R}$ maps
$z_i$ to a signed score $c_i=c_{\psi}(z_i)$, where positive scores indicate non-violating states and negative scores indicate constraint-violating states. Thus, binary labels
$\{0,1\}$ are mapped to the signed labels $\{-1,+1\}$, with $y_i=0$ mapped to $-1$ and $y_i=1$ mapped to $+1$, and
the classifier is trained with a hinge loss,
\begin{equation}
    \textstyle\mathcal{L}_{\mathrm{obs}}
    =
    \frac{1}{N}
    \sum_{i=1}^{N}
    \max\left(0,\; m - \tilde{y}_i c_{\psi}(z_i)\right),
\end{equation}
\looseness-1where $m$ is the classification margin. We normalize latent features using the
training-split mean and standard deviation, then calibrate the classifier threshold on
a held-out split. Denote $\mathcal{I}_{\mathrm{cal}}^{-}:=\{i\in\mathcal{I}_{\mathrm{cal}}:y_i=0\}$ as the violating calibration
indices. For each $i\in\mathcal{I}_{\mathrm{cal}}^{-}$, we compute the nonconformity
score
$r_i = \max\left(0,\; c_{\psi}(z_i)\right)$. 
We then choose a conformal threshold $\eta$ as the empirical $(1-\delta)$ quantile of these nonconformity scores. At planning time, a latent Markov state $s$ is treated as safe if
\begin{equation}
    c_{\psi}(\Pi_z s) \ge \eta,\qquad \Longleftrightarrow\qquad g_{\psi}(s):=-c_{\psi}(\Pi_z s) \le -\eta.
\end{equation}
This yields a conservative obstacle classifier used during constrained planning: the
world model predicts latent rollouts, and the classifier detects whether they enter the
learned obstacle set.

\subsection{Latent Uncertainty Quantification via Conformal Prediction (CP)}\label{sec:cp}

To ensure closed-loop safety for planning, we need two additional sources of uncertainty to be bounded: 1) latent dynamics prediction error and 2) in-distribution domain calibration. First, to quantify latent prediction error, we use split CP \cite{tibshirani2019conformal, lei2014distribution}. 
Let $\mathcal{D}_{\mathrm{cal}}=\{(s_i,u_i,s_i^+)\}_{i=1}^n$ be a held-out set of
latent transitions obtained by encoding a subset of $\mathcal{D}$, and define residuals $e_i:=s_i^+-f_\phi(s_i,u_i)$. To capture
state-action-dependent error geometry, we train an uncertainty model
$\Sigma_\psi:\mathcal{S}\times\mathcal{U}\to\mathbb{S}_{++}^{n_s}$ using the
multivariate Gaussian negative log-likelihood loss
\begin{equation}\label{eq:MGNLL}
    \mathcal{L}_{\mathrm{MGNLL}}
    =
    \textstyle\frac{1}{2}
    \left(
        e_i^\top \Sigma_\psi(s_i,u_i)^{-1} e_i
        +
        \log\det \Sigma_\psi(s_i,u_i)
    \right).
\end{equation}
The covariance model is not assumed to be calibrated; it only provides a local ellipsoid
shape for the prediction error. We calibrate its scale using CP with scores
$r_i:=e_i^\top\Sigma_\psi(s_i,u_i)^{-1}e_i$. 
Let $r_{(1)}\le\cdots\le r_{(n)}$ be the sorted scores. For trajectory-level failure
probability $\delta$ over horizon $T$, set $\bar\delta:=\delta/T$ and
$q:=r_{\left(\left\lceil (n+1)(1-\bar\delta)\right\rceil\right)}$, 
with $q=+\infty$ if the index exceeds $n$. For any latent state-action pair $(s,u)$,
the calibrated one-step error set is
\begin{equation}\label{eq:cp_set_prediction}
    \mathcal{E}(s,u)
    :=
    \{e\mid e^\top\Sigma_\psi(s,u)^{-1}e\le q\}.
\end{equation}
Writing $\Sigma_\psi(s,u)=L_\psi(s,u)L_\psi(s,u)^\top$ using its Cholesky factor $L_\psi$, this is equivalently
$\mathcal{E}(s,u)=\sqrt{q}\,L_\psi(s,u)\mathcal{B}^{n_s}$, where $\mathcal{B}^{n_s} := \{s \in \mathbb{R}^{n_s} \mid \Vert s \Vert_2 \le 1\}$. 

Second, we conformalize an in-domain latent region to keep the planner within the
support of the calibration data. Since the SIGReg loss encourages the latent
embeddings to follow an approximately Gaussian distribution
\cite{balestriero2025lejepa}, we define the in-domain set as a calibrated Gaussian
density sublevel set. Concretely, we fit an ellipsoid to the calibration embeddings
and choose its conformal threshold so that it contains most calibration latents with
the desired coverage. Let $\mu$ and $\Sigma_{\mathrm{ID}}$ be the empirical
mean and covariance of calibration latents in $\mathcal{D}_\textrm{cal}$, and define
$d_i:=(s_i-\mu)^\top\Sigma_{\mathrm{ID}}^{-1}(s_i-\mu)$. With $q_{\mathrm{ID}}$ the
$(1-\alpha_{\mathrm{ID}})$ split conformal quantile of $\{d_i\}$, the calibrated
in-domain set is
\begin{equation}\label{eq:in_domain}
    \mathcal{I}:=
    \{s\mid (s-\mu)^\top\Sigma_{\mathrm{ID}}^{-1}(s-\mu)\le q_{\mathrm{ID}}\}.
\end{equation}
During planning, we require $\mathcal{R}_k^s\subseteq\mathcal{I}$ for all $k$. This
controls support mismatch, while the CP guarantee itself assumes that closed-loop
residual scores are exchangeable with the calibration scores. In practice, we support
this assumption by collecting calibration transitions from representative randomized
rollouts that vary initial states and goals. 

\subsection{Conformalized Robust MPC via System Level Synthesis (SLS)}
\label{sec:sls_mpc}

\vspace{-3pt}
To synthesize robust latent feedback policies, we use system
level synthesis (SLS)~\cite{anderson2019system}. SLS enables efficient robust MPC by optimizing
over closed-loop system responses rather than feedback gains directly, and admits fast
GPU-parallel implementations for high-dimensional systems~\cite{fang2026safe}. This
is particularly important in latent-space planning, where the latent state dimension can be high. We build on \cite{srinivasan2026safety}, using the conformal
calibration in Sec.~\ref{sec:cp} to define the disturbance sets to be propagated by SLS. 
We model the learned latent dynamics with calibrated additive uncertainty,
\begin{equation}
    s_{k+1}=f_\phi(s_k,u_k)+E_k w_k,
    \qquad
    w_k\in\mathcal{B}^{n_s},
\end{equation}
where $E_k = \sqrt{q} L_\psi(s_k, u_k)$, recovering $E_k \mathcal{B}^{n_s}$ as the CP-calibrated, ellipsoidal one-step latent prediction-error
set defined in \eqref{eq:cp_set_prediction}. Given the current latent state $\bar s_0$, SLS optimizes a nominal trajectory
$\bm{\zeta}:=\{\zeta_k\}_{k=0}^{T}$, controls
$\mathbf{v}:=\{v_k\}_{k=0}^{T-1}$, and closed-loop response matrices
$\Phi^s,\Phi^u$. The responses map disturbances to deviations from the nominal state
and control trajectories. Around $(\zeta_k,v_k)$, let
$A_k:=\nabla_s f_\phi(\zeta_k,v_k)$ and
$B_k:=\nabla_u f_\phi(\zeta_k,v_k)$. From Sec.~\ref{sec:cp}, if
$\Sigma_\psi(\zeta_k,v_k)=L_\psi(\zeta_k,v_k)L_\psi(\zeta_k,v_k)^\top$, we set
\begin{equation}
    E_k:=\sqrt{q}\,L_\psi(\zeta_k,v_k).
\end{equation}
Thus SLS propagates the CP-calibrated ellipsoidal model errors through the horizon to
construct reachable tubes and constraint tightenings. 
The resulting CP-calibrated SLS MPC problem is
\begin{subequations}
\begin{align}
\min_{\bm{\zeta},\mathbf{v},\Phi^s,\Phi^u}\quad
    & J(\bm{\zeta},\mathbf{v}) + J_f(\zeta_T) + H(\Phi^s,\Phi^u)\label{eq:sls_objective} \\
\mathrm{s.t.}\quad
    & \zeta_{k+1}=f_\phi(\zeta_k,v_k),\quad \zeta_0=\bar s_0,
      && k=0,\ldots,T-1, \label{eq:sls_dyn}\\
    & \Phi^s_{k+1,j}=A_k\Phi^s_{k,j}+B_k\Phi^u_{k,j},
      && 0\le j<k,\ k=0,\ldots,T-1, \label{eq:sls_slp}\\
    & \Phi^s_{j+1,j}=E_j,
      && j=0,\ldots,T-1, \label{eq:sls_slp_init}\\
    & g_i(\zeta_k,v_k)+b_i+
      \tau_{i,k}
      \le 0,
      && i=1,\ldots,n_c,\ k=0,\ldots,T, \label{eq:sls_tightenings}\\
    & \mathcal{R}_k^s\subseteq\mathcal{I},
      && k=0,\ldots,T,\label{eq:sls_indist}
\end{align}
\label{eq:cp_sls_mpc}
\end{subequations}

\vspace{-14pt}
where $\tau_{i,k} := \sum_{j=0}^{k-1}
      \Vert
          \nabla_s g_i(\zeta_k,v_k)\Phi^s_{k,j}
          +
          \nabla_u g_i(\zeta_k,v_k)\Phi^u_{k,j}
      \Vert_2$, and $\mathcal{R}_k^s
    :=
    \zeta_k\oplus\bigoplus_{j=0}^{k-1}\Phi^s_{k,j}\mathcal{B}^{n_s}$, 
    $\mathcal{R}_k^u
    :=
    v_k\oplus\bigoplus_{j=0}^{k-1}\Phi^u_{k,j}\mathcal{B}^{n_s}$
are the state and control reachable tubes, respectively. Here, $\bigoplus$ and $\oplus$ denote Minkowski sums, and $b_i$ is a constraint-index-dependent bias equal to $\eta$ for obstacle constraints and zero for control constraints. For sets $\mathcal{A}$ and $\mathcal{B}$, $\mathcal{A} \oplus \mathcal{B} = \{a + b\mid a \in\mathcal{A}, b\in\mathcal{B}\}$. The objective \eqref{eq:sls_objective} combines nominal task cost,
terminal cost, and a tube-size regularizer $H(\Phi^s, \Phi^u)$ (see App. \ref{app:sls} for details). The constraint \eqref{eq:sls_dyn} enforces nominal
feasibility under the latent dynamics, and \eqref{eq:sls_slp}-\eqref{eq:sls_slp_init} enforce that SLS
correctly propagates disturbance, with the CP-calibrated disturbance set injected through
\eqref{eq:sls_slp_init}. The constraints \eqref{eq:sls_tightenings} are tightened by reachable tube margins $\tau_{i,k}$, enforcing robust satisfaction of the
learned latent safety constraints over the full reachable tube, and
\eqref{eq:sls_indist} keeps the reachable tube inside the conformalized
in-domain latent region \eqref{eq:in_domain}. For instance, with $g_\psi(s):=-c_\psi(\Pi_zs)$, the obstacle constraint $g_\psi(s_k) \le -\eta$ becomes $g_\psi(\zeta_k) +\eta + \tau_k \le 0$ at each step $k$ to enforce \emph{robust} constraint satisfaction. This CP-informed SLS scheme provides the following robustness guarantee (proof in App. \ref{app:proofs}):

\begin{theorem}[Probabilistic containment of conformal SLS tubes]
\label{thm:cp_sls_latent_tube}
Consider the latent closed-loop dynamics
$s_{k+1}=f_\phi(s_k,u_k)+e_k$,
for $k=0,\ldots,T-1$.
Let $q$ be the split conformal quantile of the calibration scores
$r_i
    =
    e_i^\top \Sigma_\psi(s_i,u_i)^{-1}e_i$ and
    $e_i:=s_i^+-f_\phi(s_i,u_i)$, 
computed at per-step miscoverage $\bar\delta=\delta/T$. For each closed-loop
transition, define the test score
$r_k^{\mathrm{test}} := e_k^\top \Sigma_\psi(s_k,u_k)^{-1}e_k$. 
Assume that, for each $k$, $r_k^{\mathrm{test}}$ is exchangeable with the calibration
scores. Let $\Sigma_\psi(s,u)=L_\psi(s,u)L_\psi(s,u)^\top$ and define the local
conformal error set
$\mathcal{E}(s,u)
    :=
    \sqrt{q}\,L_\psi(s,u)\mathcal{B}^{n_s}$ as defined in \eqref{eq:cp_set_prediction}. 
Suppose the SLS controller is synthesized with disturbance matrices $E_k$ such that
the realized conformal error sets are contained in the SLS disturbance sets,
$\mathcal{E}(s_k,u_k)\subseteq E_k\mathcal{B}^{n_s}$, for $k=0,\ldots,T-1$. 
If the SLS controller is robustly correct for all disturbances
$w_k\in\mathcal{B}^{n_s}$ in
$\Delta s_{k+1}
    =
    A_k\Delta s_k+B_k\Delta u_k+E_kw_k$, where $\Delta s_k := s_k - \zeta_k$ and $\Delta u_k := u_k - v_k$, 
then
$\mathbb{P}\left[
        (s_k\in\mathcal{R}_k^s,\;
        \forall k=0,\ldots,T) \wedge\;
        (u_k\in\mathcal{R}_k^u,\;
        \forall k=0,\ldots,T-1)
    \right]
    \ge 1-\delta$.
\end{theorem}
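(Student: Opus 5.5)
The plan is to combine a per-step conformal coverage statement with a union bound over the horizon, and then pass the resulting event through the deterministic robustness property of the SLS controller.

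\emph{Step 1 (per-step coverage).} For each fixed $k$, apply the standard split conformal guarantee \cite{tibshirani2019conformal, lei2014distribution}. If $r_k^{\mathrm{test}}$ is exchangeable with $r_1,\ldots,r_n$ and $q=r_{(\lceil (n+1)(1-\bar\delta)\rceil)}$, then $\mathbb{P}[r_k^{\mathrm{test}}\le q]\ge 1-\bar\delta$, and the case $q=+\infty$ holds trivially. Call the failure event $F_k:=\{r_k^{\mathrm{test}}>q\}$. Because the hypothesis only gives exchangeability at each $k$ separately, and says nothing about the joint law across $k$, I will avoid any independence argument and use only these marginal bounds.

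\emph{Step 2 (union bound).} Using Step 1 for every $k$,
\[
\mathbb{P}\Big[\textstyle\bigcup_{k=0}^{T-1}F_k\Big]\le \sum_{k=0}^{T-1}\mathbb{P}[F_k]\le T\bar\delta=\delta .
\]
So the good event $G:=\bigcap_k F_k^c$ has probability at least $1-\delta$. This is exactly where the choice $\bar\delta=\delta/T$ is used.

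\emph{Step 3 (from scores to disturbances).} On $G$, for every $k$ we have $e_k^\top\Sigma_\psi(s_k,u_k)^{-1}e_k\le q$. Set $w'=L_\psi(s_k,u_k)^{-1}e_k/\sqrt q$; the Cholesky factor is invertible because $\Sigma_\psi\in\mathbb{S}^{n_s}_{++}$. Then $\|w'\|_2^2 = e_k^\top\Sigma_\psi^{-1}e_k/q\le 1$, so $e_k\in\mathcal{E}(s_k,u_k)$. When $q=0$ the score is zero, which forces $e_k=0$, and the conclusion still holds. The inclusion hypothesis $\mathcal{E}(s_k,u_k)\subseteq E_k\mathcal{B}^{n_s}$ then gives some $w_k\in\mathcal{B}^{n_s}$ with $e_k=E_kw_k$.

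\emph{Step 4 (deterministic robustness).} On $G$, the realized closed loop is one admissible disturbance realization $\{w_k\}_{k=0}^{T-1}\subset\mathcal{B}^{n_s}$ of the model $\Delta s_{k+1}=A_k\Delta s_k+B_k\Delta u_k+E_kw_k$. I read the hypothesis ``the SLS controller is robustly correct'' as the standard SLS conclusion: under $\Delta s_0=0$ (since $s_0=\zeta_0=\bar s_0$), the deviations satisfy $\Delta s_k=\sum_{j<k}\Phi^s_{k,j}w_j$ and $\Delta u_k=\sum_{j<k}\Phi^u_{k,j}w_j$ for all admissible $w$. It follows that $s_k\in\zeta_k\oplus\bigoplus_j\Phi^s_{k,j}\mathcal{B}^{n_s}=\mathcal{R}_k^s$ and $u_k\in\mathcal{R}_k^u$. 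Hence $G$ is contained in the event of the theorem, and that event has probability at least $1-\delta$.

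\emph{Main obstacle.} The delicate point is Step 4: it links the true nonlinear closed loop to the linearized deviation model. I will not try to bound linearization error myself. Instead I will state explicitly that the robust-correctness hypothesis is taken to mean tube containment for the actual trajectory whenever every realized error is representable as $E_kw_k$ with $\|w_k\|\le1$. The probabilistic content, Steps 1–3, is then fully rigorous given the exchangeability assumption.
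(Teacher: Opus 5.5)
Your proposal is correct and matches the paper's proof step for step. Both arguments use per-step split-CP coverage at level $\bar\delta$ and a union bound over the $T$ transitions. Both then use the equivalence $r_k^{\mathrm{test}}\le q \Longleftrightarrow e_k\in\mathcal{E}(s_k,u_k)$ together with the containment hypothesis to write $e_k=E_kw_k$ with $w_k\in\mathcal{B}^{n_s}$, and finally invoke the robust-correctness hypothesis to obtain tube containment. Your extra care with the $q=0$ and $q=+\infty$ cases, invertibility of $L_\psi$, and reading robust correctness as containment of the true trajectory only makes explicit what the paper leaves implicit.
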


In addition to robust MPC, the latent dynamics in \eqref{eq:latent_dynamics}, obtained via Sec. \ref{sec:method_training}, can also be used for gradient-based nominal trajectory optimization by removing the reachability-related terms in \eqref{eq:cp_sls_mpc} and retaining only the nominal latent space dynamics constraints in \eqref{eq:sls_dyn}.

\vspace{-15pt}
\section{Experiments}\label{sec:experiments}

\vspace{-8pt}
\looseness-1To show the utility of our world model learning method (Sec. \ref{sec:method_training}) for \textit{nominal gradient-based planning} and the safety assurances provided by our \textit{robust MPC planner} (\method, Sec. \ref{sec:sls_mpc}), we evaluate on four tasks: Reacher \cite{tassa2018deepmind}, OGBench Cube \cite{park2025ogbench}, Push-T \cite{zhou2024dinowm}, and a bimanual rope manipulation task. 

We compare our gradient-based nominal planner against world-model planning baselines in Sec. \ref{sec:experiments_nominal}, and compare \method~against safe latent-space planning baselines in Sec. \ref{sec:experiments_safe}. For nominal planning, as discussed in Sec. \ref{sec:sls_mpc}, we use GPU-SLS~\cite{fang2026safe} without tubes or constraints (which simplifies to iLQR \cite{li2004iterative}). For robust planning, we use \method. We compare against three latent-space world-model planning baselines: Le-WM \cite{lewm}, DINO-WM \cite{zhou2024dinowm}, and PLDM \cite{sobal2025pldm}. For safe latent space planning, we compare against two safety-focused baselines: HJ-filtered \cite{nakamura2025generalizing} and LPB \cite{sun2025latentpolicybarrierlearning}, as well as two non-robust ablations of \method~(i.e., constrained iLQR without tubes or constraint \eqref{eq:in_domain}). 

\looseness-1All results are run on an Intel i9-14900K CPU, 64\,GB RAM, with an NVIDIA RTX 4090 GPU. 
For all tasks, planning performance is evaluated on held-out test trajectories. To benchmark the nominal planner, we evaluate over five random seeds, with 40 evaluation episodes per seed. Unlike prior works that evaluate shorter-horizon planning segments, we require the planner to solve the complete task from the initial state to the corresponding terminal goal, final state, or task-specific completion condition. This makes the benchmark substantially more difficult, since errors accumulate over the full task horizon rather than only over a short local segment. For constrained tasks, we evaluate on 35 randomized start and goal conditions per task. Details on implementation and setup are in App. \ref{app:exp_details}. 

We report metrics according to the task setting. For unconstrained tasks, we report the task success rate (higher is better) and the minimum distance to the goal across evaluation trials (lower is better). For constrained tasks, we also report the safety rate (adherence to constraints) and the robust success rate (when both constraint satisfaction and goal-reach conditions are met).

\begin{table}[t]
\centering
\small
\caption{Ablation study on Reacher. The reference setting is the full method described in Sec.~\ref{sec:methodology}. World-model ablations keep the planner fixed and change only the latent model/training component. The planner ablation keeps the world model fixed and changes only the MPC optimizer.}
\label{tab:reacher_ablation_combined}
\resizebox{\linewidth}{!}{
\begin{tabular}{llcccc}
\toprule
Ablation & Variant & \textbf{Success (\%)} $\uparrow$ & Min. qpos dist. $\downarrow$ & Final qpos dist. $\downarrow$ & Solve time / step (ms) $\downarrow$ \\
\midrule
Reference 
& Full model + iLQR & $\mathbf{83.50 \pm 4.06}$ & $0.377 \pm 0.119$ & $0.497 \pm 0.152$ & $426.35 \pm 17.52$ \\
\midrule
\multirow{2}{*}{World model} 
& Non-Markov history-2 + iLQR & $1.00 \pm 1.22$ & $1.786 \pm 0.088$ & $2.080 \pm 0.087$ & $67.00 \pm 3.01$ \\
& Single-step loss + iLQR & $24.50 \pm 4.58$ & $1.283 \pm 0.138$ & $1.746 \pm 0.205$ & $578.33 \pm 16.56$ \\
\midrule
Planner 
& Full model + CEM & $57.00 \pm 1.87$ & $0.380 \pm 0.100$ & $0.511 \pm 0.106$ & $67.00 \pm 0.94$ \\
\bottomrule
\end{tabular}
}
\vspace{-10pt}
\end{table}

\begin{table}[t]
\centering
\small
\caption{Nominal planning performance across benchmark tasks. Success is reported in percentage, and Dist. denotes minimum distance to the goal.}
\label{tab:nominal_results}
\resizebox{\linewidth}{!}{
\begin{tabular}{llcc@{\hskip 15pt}llcc}
\toprule
Task & Method & \textbf{Success (\%)} $\uparrow$ & Dist. $\downarrow$ & Task & Method & \textbf{Success (\%)} $\uparrow$ & Dist. $\downarrow$ \\
\midrule
\multirow{4}{*}{Reacher} 
& Ours & $\bm{83.5 \pm 4.06}$ & $0.377 \pm 0.119$ & \multirow{4}{*}{Rope} 
& Ours & $\bm{93.75 \pm 2.71}$ & $0.020 \pm 0.002$ \\
& Le-WM & $17.0 \pm 7.6$ & $1.305 \pm 1.109$ & 
& Le-WM & $13.0 \pm 4.3$ & $0.093 \pm 0.015$ \\
& DINO-WM & $24.5 \pm 6.2$ & $0.533 \pm 0.115$ & 
& DINO-WM & $25.0 \pm 7.91$ & $0.058 \pm 0.002$ \\
& PLDM & $40.5 \pm 9.9$ & $0.821 \pm 0.158$ & 
& PLDM & $7.0 \pm 5.34$ & $0.130 \pm 0.019$ \\
\midrule
\multirow{4}{*}{Cube} 
& Ours & $\bm{91.5 \pm 4.2}$ & $0.051 \pm 0.055$ & \multirow{4}{*}{Push-T} 
& Ours & $\bm{51.54 \pm 4.40}$ & $29.86 \pm 35.55$ \\
& Le-WM & $28.5 \pm 3.8$ & $0.133 \pm 0.023$ & 
& Le-WM & $1.5 \pm 2.00$ & $113.61 \pm 10.25$ \\
& DINO-WM & $69.0 \pm 2.95$ & $0.054 \pm 0.048$ & 
& DINO-WM & $2.52 \pm 0.04$ & $112.58 \pm 3.36$ \\
& PLDM & $28.75 \pm 6.62$ & $0.145 \pm 0.005$ & 
& PLDM & $1.50 \pm 1.37$ & $126.07 \pm 5.26$ \\
\bottomrule
\end{tabular}
}
\vspace{-10pt}
\end{table}

\vspace{-8pt}
\subsection{Nominal Planning}\label{sec:experiments_nominal}
\vspace{-4pt}
\paragraph{Reacher}
On Reacher, our nominal planner achieves the strongest performance, reaching $83.50\%$ success. Detailed numbers can be found in Table~\ref{tab:nominal_results}. The ablations in Table~\ref{tab:reacher_ablation_combined} show that this improvement is not due to a single component: using a non-Markov history-based predictor severely degrades performance, indicating that the learned Markov latent state is important for planning, while removing the multi-step dynamics loss also weakens goal reaching by reducing rollout accuracy. Replacing iLQR with CEM further lowers success, suggesting that although sampling-based planning can find reasonable trajectories, gradient-based planning better exploits the local linearity of our learned latent dynamics.

\vspace{-4pt}
\paragraph{Cube}
For OGBench Cube, we initialize each planning from an oracle-grasped state, so the experiment isolates the post-grasp transport problem of moving the cube to the target. Under this setup, our nominal planning achieves the strongest nominal planning performance, reaching $91.5\%$ success, while the best baseline, DINO-WM, only reaches $69.0\%$. Detailed numbers can be found in Table~\ref{tab:nominal_results}. This indicates that the learned Markov latent dynamics together with gradient-based planning more reliably transports the grasped object to the target than prior world-model planners.

\begin{figure*}[t] 
    \centering
    \includegraphics[width=\linewidth]{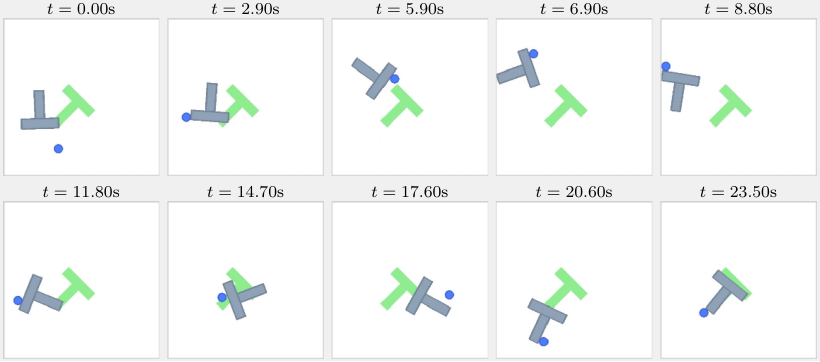}
    \vspace{-1.5em}
    \caption{\textbf{Nominal planning; Push-T.} Nominal planning with the T successfully reaching the target. Our planner executes complex maneuvers (including making and breaking contact with the T), to successfully reach the goal state. A denser set of keyframes is given in Fig. \ref{fig:pusht_nominal}.}
    \vspace{-10pt}
    \label{fig:pusht_nominal_long_rollout}
\end{figure*}

\vspace{-4pt}
\paragraph{Push-T}
Push-T is the most difficult nominal planning task in our evaluation, as it requires long-horizon contact-rich pushing with accurate control of both block position and orientation. Thus, all methods achieve lower performance compared with the other tasks. Nevertheless, our method maintains the strongest performance, achieving $51.54 \pm 4.40\%$ success and the lowest minimum block-position error of $29.86 \pm 35.55$ pixels. Figure \ref{fig:pusht_nominal_long_rollout} provides a sample rollout demonstrating the extensive maneuvers taken by our planner to successfully push the T to the goal. In contrast, DINO-WM and PLDM achieve only $2.52 \pm 0.04\%$ and $1.50 \pm 1.37\%$ success, respectively, with much larger minimum distances. These results show that, even when the task difficulty causes overall success rates to decrease, our learned Markov latent dynamics and gradient-based planner still provide the most reliable planning signal for contact-rich manipulation. 
See App.~\ref{app:extend_results} for more Push-T baselines.
\vspace{-4pt}
\paragraph{Rope}
Lastly, we evaluate on a customized two-manipulator rope task in MuJoCo, where two 7-DoF KUKA iiwa arms are mounted on opposite sides of a table and hold a rope between their end-effectors (Figure ~\ref{fig:block_diag}). Since this task is not a standard benchmark, we define a three-dimensional task state using the rope reach, height, and width, which determines the target rope geometry through symmetric target positions for the two attachment points. 
In this task, we seek to manipulate the deformable rope from the initial state to the desired rope configuration from the goal image.
Under this setting, our nominal planning achieves $93.75\%$ success, while the strongest baseline, DINO-WM, reaches only $25.0\%$. Table~\ref{tab:nominal_results} demonstrates that the learned Markov latent dynamics and gradient-based planner are especially effective in the bimanual rope setting, where small errors from either arm can significantly alter the final rope geometry.

\vspace{-4pt}
\looseness-1\paragraph{Hardware Rope} To validate our nominal planner, we replicated the rope task in the real world. Hardware implementation details are in Appendix \ref{app:hardware_implementation}. In our hardware results, our planner successfully stretches the rope from the start image (MPC step 0) to the goal image (MPC step 40) in Figure \ref{fig:rope_real_2_nominal_main}(a). In another trial (Figure \ref{fig:rope_real_nominal}, Appendix \ref{app:extend_results}), our latent space planner successfully moved the rope forward. Figure \ref{fig:rope_real_2_nominal_main}(b) and (c) plot the task space error between the configuration at each MPC step and the goal configuration and the error between the latent vector and goal latent vector, indicating a clear correlation between latent space goal error and task space error. 

\begin{figure}[H]
    \centering
    \includegraphics[width=\linewidth, trim={0.0cm 0.0cm 0.0cm 0.0cm}, clip]{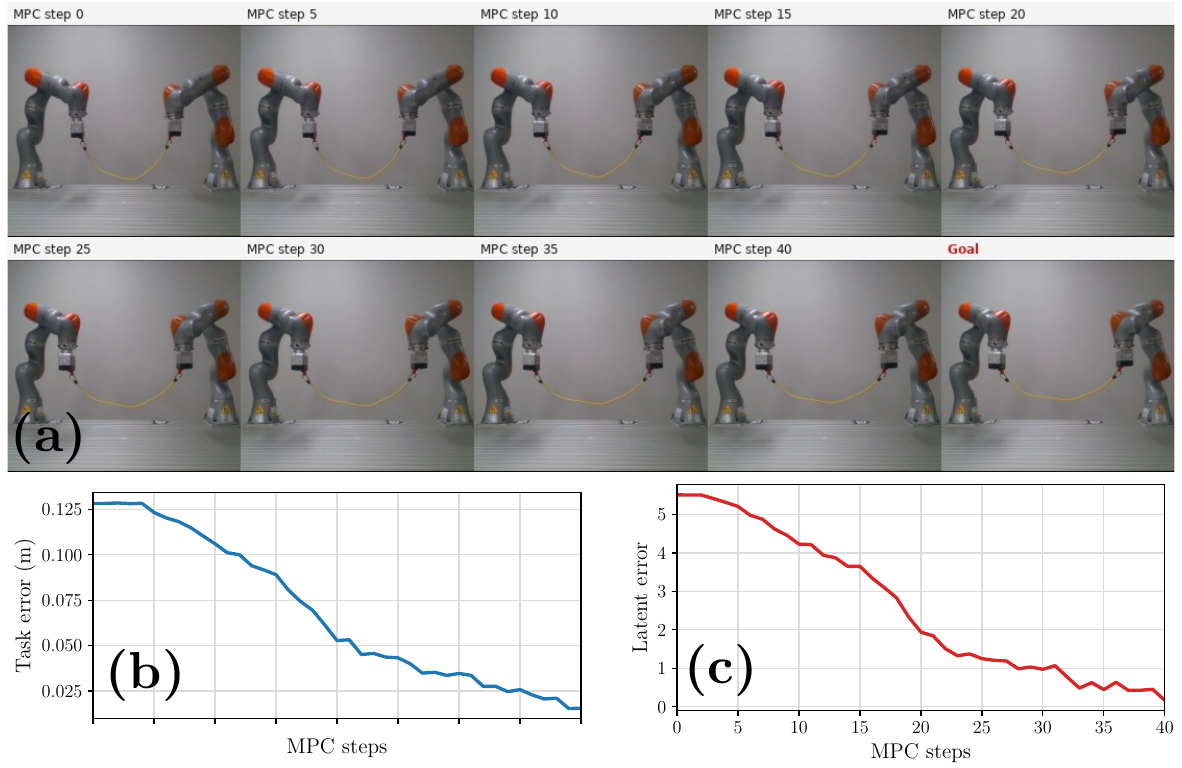}
    \vspace{-1.5em}
    \caption{\textbf{Hardware Rope.} (a) Time-lapse of iLQR-based long-horizon planning on real hardware directly from pixels without state estimation. In this example, the bimanual setup has to stretch the rope to reach the goal image. (b) Task and (c) latent space error plots demonstrating convergence to the desired goal state on hardware.}
    \label{fig:rope_real_2_nominal_main}
\end{figure}

\vspace{-7mm}
\begin{figure*}[h]
    \centering
    \includegraphics[width=\linewidth, trim={0.0cm 0.9cm 0.0cm 0.0cm}, clip]{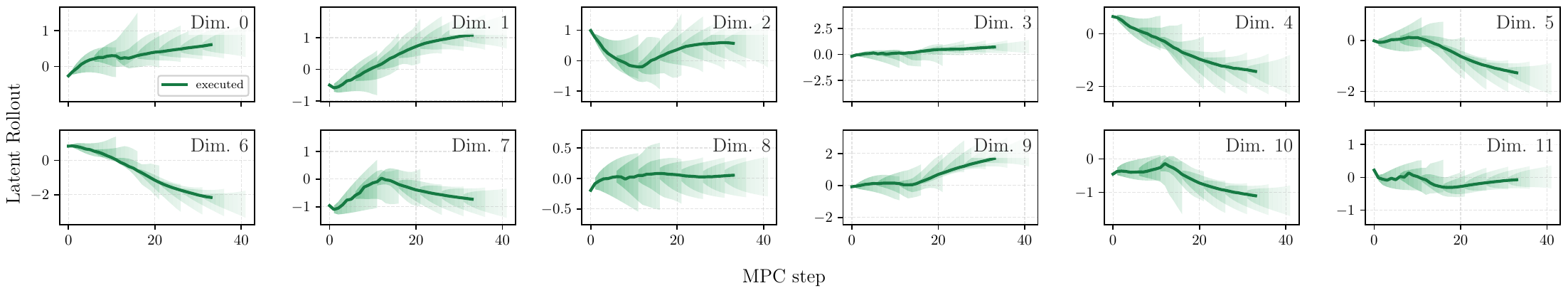}
    \vspace{-1.5em}
    \caption{\textbf{Rope.} Latent state rollout (dark green), with the projected robust tubes from SLS (shaded). }
    \label{fig:fullwidth}
\vspace{-15pt}
\end{figure*}

\subsection{Robustly-Constrained Safe Latent Space MPC}\label{sec:experiments_safe}

\begin{wrapfigure}{r}{0.55\linewidth}
    \vspace{-1.5em}
    \centering
    \includegraphics[width=\linewidth, trim={0.0cm 0.0cm 0.0cm 0.4cm}, clip]{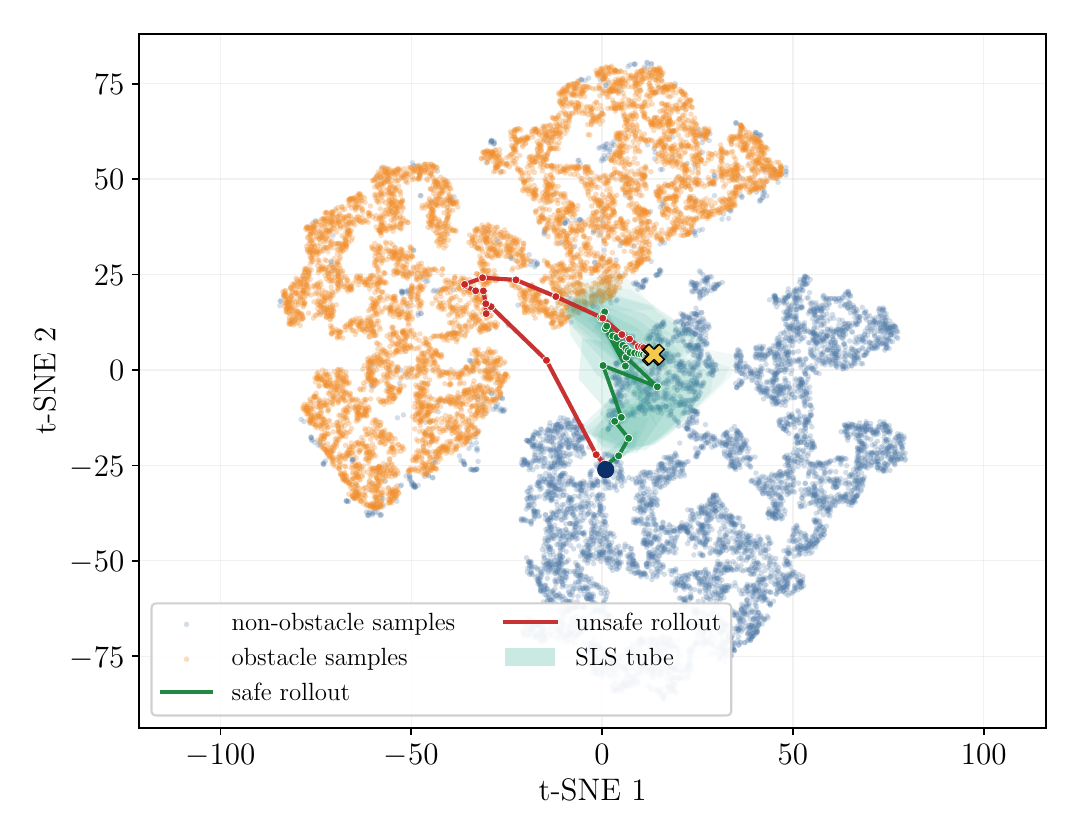}
    \vspace{-22pt}
    \caption{\textbf{Robust planning; Rope.} t-SNE visualization of the rollout. Our tubes stay predominantly in the safe region; the small overlap with the obstacle region is due to the lossy 2D t-SNE projection.}
    \label{fig:tsne_latent_rollout}
    \vspace{-1em}
\end{wrapfigure}

\vspace{-4pt}
\paragraph{Rope} For the rope task, we introduce an ellipsoidal obstacle in the center of the task space (details in Appendix \ref{app:exp_details}) that the rope must avoid while moving from the start position to a goal position. In Table \ref{tab:combined_safety_success_timing}, we find that $\text{SLS}^{2}$ is the only method that always avoids the obstacle despite perturbations due to latent model error (Figure \ref{fig:block_diag}). Conversely, the value function in the HJ-filtered baseline fails to predict safe and unsafe states, colliding with the ellipsoid. Similarly, the constrained iLQR baselines show that non-robust iLQR incurs safety violations by not accounting for modeling errors. Furthermore, SLS$^{2}$ has the highest rate for robust task completion with a rate of $88.57\%$. The rollout in the latent space (Figure \ref{fig:fullwidth}) shows that our trajectories are contained within the projected tubes. Lastly, Figure \ref{fig:tsne_latent_rollout} provides a t-SNE~\cite{van2008visualizing} visualization of the rollout against the obstacle region, demonstrating the robust but not overly conservative tubes.

\vspace{-4pt}
\paragraph{Reacher} For Reacher, we introduce joint angle constraints between -2.88 and -2.45 radians on the second joint, which must be robustly satisfied to complete the task (Figure \ref{fig:cube_and_reacher}). Both our robust method, SLS$^{2}$, and the HJ-filter baseline, using our gradient-based nominal iLQR planner as the nominal policy, robustly complete the task with a $100\%$ success rate. In contrast, the constrained iLQR variants remain safe but fail to complete the task in one instance. Notably, constrained iLQR and SLS$^{2}$ have comparable solve times, demonstrating that the robust constraint tightenings in \eqref{eq:sls_tightenings} introduce limited computational overhead.

\begin{table}[t]
\centering
\small
\caption{We evaluate SLS$^{2}$ against baseline approaches for robust task completion. For robust task completion where trajectories are safe and reach the goal, \method~performs the best compared to all baselines including the constrained iLQR (i.e., \method~without robust constraints) with or without the ID constraint \eqref{eq:sls_indist}, nominal iLQR (without any constraints), the HJ-filtered, and LPB baselines.}
\label{tab:combined_safety_success_timing}
\resizebox{\linewidth}{!}{
\begin{tabular}{cccccc}
\toprule
Task & Method & Safety (\%) $\uparrow$ & Success (\%) $\uparrow$ & \textbf{Robust Success (\%)} $\uparrow$ & Solve time/step (s) $\downarrow$\\
\midrule
\multirow{5}{*}{Reacher} 
& Ours (SLS$^2$) & 100.00 & 100.00 & \textbf{100.00} & 0.360 $\pm$ 0.002 \\
& Constrained iLQR w/ latent ID & 100.00 & 97.22 & 97.22 & 0.360 $\pm$ 0.002 \\
& Constrained iLQR w/o latent ID & 100.00 & 97.22 & 97.22 & 0.363 $\pm$ 0.003 \\
& HJ-filtered & 100.00 & 100.0 & \textbf{100.0} & -- \\
& LPB & 0.00 & 100.00 & 0.00 & -- \\
& Nominal iLQR & 0.00 & 100.0 & 0.00 & -- \\
\midrule
\multirow{5}{*}{OGBench (Cube)} 
& Ours (SLS$^2$) & 94.29 & 97.14 & \textbf{94.29} & 0.359 $\pm$ 0.016 \\
& Constrained iLQR w/ latent ID & 30.56 & 86.11 & 27.78 & 0.345 $\pm$ 0.016 \\
& Constrained iLQR w/o latent ID & 25.00 & 91.67 & 22.22 & 0.349 $\pm$ 0.013 \\
& HJ-filtered & 0.00 & 88.57 & 0.00 & -- \\
& LPB & 0.00 & 85.71 & 0.00 & -- \\
& Nominal iLQR & 0.00 & 82.86 & 0.00 & -- \\
\midrule
\multirow{6}{*}{Rope} 
& Ours (SLS$^2$) & 100.00 & 88.57 & \textbf{88.57} & 0.650 $\pm$ 0.042 \\
& Constrained iLQR w/ latent ID & 69.44 & 91.67 & 61.11 & 0.597 $\pm$ 0.026 \\
& Constrained iLQR w/o latent ID & 77.78 & 91.67 & 69.44 & 0.593 $\pm$ 0.019 \\
& HJ-filtered & 37.14 & 100.00 & 37.14 & -- \\
& LPB & 57.15 & 100.00 & 57.15 & -- \\
& Nominal iLQR & 54.29 & 100.00 & 54.29 & -- \\
\bottomrule
\end{tabular}
}
\vspace{-15pt}
\end{table}

\vspace{-10pt}
\paragraph{Cube} For the OGBench Cube task, we enforce that the gripper height must be below 9 cm after the arm has grasped the cube. Due to the model training, the arm has a natural tendency to lift the cube, thus causing the nominal-iLQR baseline to fail (Figure \ref{fig:cube_safe_unsafe_main}(b)). Across the remaining baselines, SLS$^2$ has the highest robust task completion rate for OGBench Cube by keeping the cube below the threshold (Figure \ref{fig:cube_safe_unsafe_main}(a)), indicating its ability to robustly complete tasks. Furthermore, by shifting the planning behavior from lifting the cube unnecessarily high, this experiment shows the utility of SLS$^2$ as a tool for post-hoc behavior alteration for world-model planning. 

\begin{figure}[H]
    \centering\vspace{-8pt}
    \includegraphics[width=1.0\linewidth, trim={0.0cm 0.0cm 0.0cm 0.0cm}, clip]{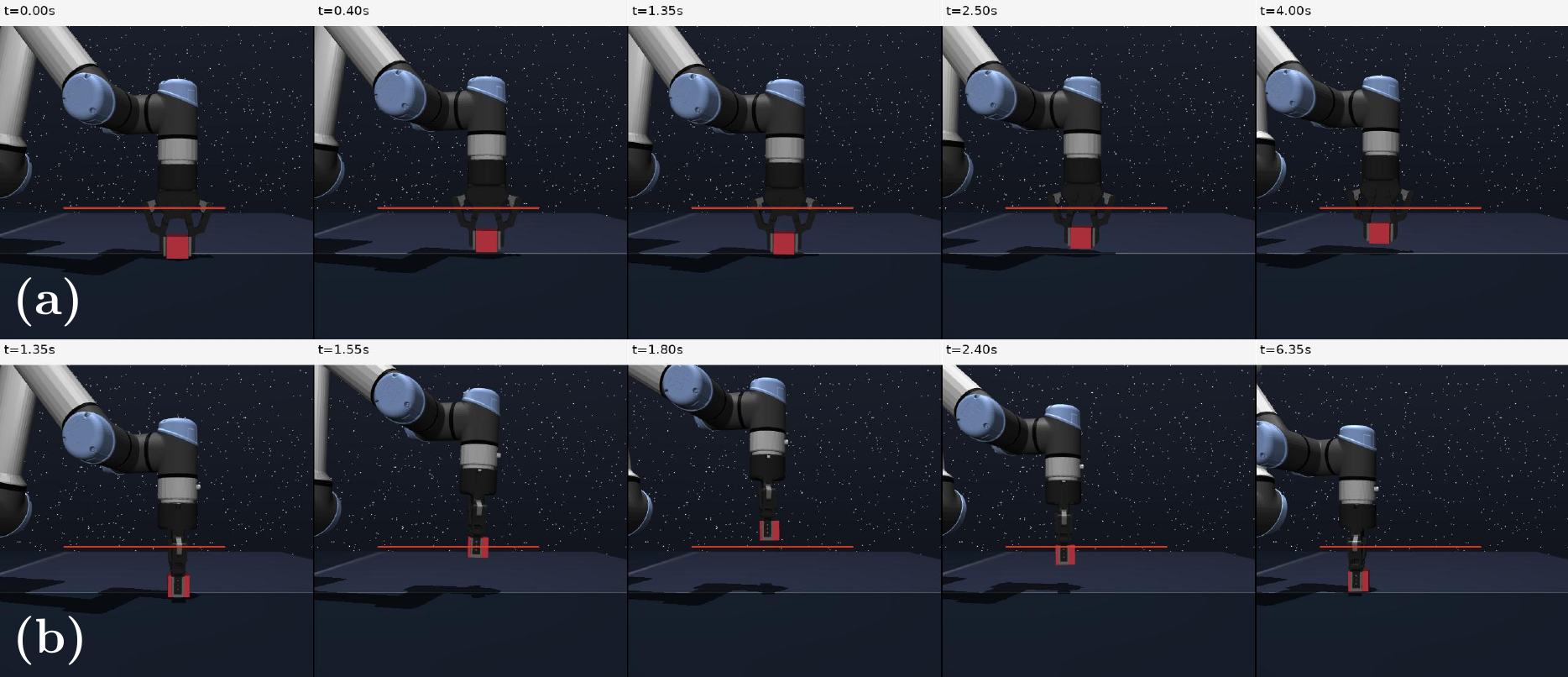}
    \vspace{-1.5em}
    \caption{\textbf{Robust planning; Cube.} Time-lapse of the rollout using (a) safe planner (\method) versus the (b) unsafe planner (nominal-iLQR). \method~maintains the cube below the height threshold (red line) while reaching the goal, unlike the nominal (unsafe) planner.}
    \label{fig:cube_safe_unsafe_main}
\end{figure}

\section{Discussion, Limitations, and Conclusion}
We propose \method, a method for probabilistically-safe latent space planning using gradient-based planners for robust visuomotor control. Methodologically, we provided an approach rigorously grounded in the uncertainty quantification and propagation tools of conformal prediction and system level synthesis. Empirically, we improve task and robust task success relative to baselines for both nominal planning and robust planning, while ensuring all components of our method, including robust planning, exploit GPU acceleration for scalability. 
\paragraph{Limitations:}  
\looseness-1Performance on environments requiring complex, sequential maneuvers, such as Push-T, remains a bottleneck because of our planner's reliance on Euclidean distance to the latent goal, which assumes an isotropic latent representation. Future work must integrate learned temporal distance metrics designed to preserve geodesic transition distances along the manifold of valid states.


\clearpage

\bibliography{example}  

\begin{thebibliography}{62}
\providecommand{\natexlab}[1]{#1}
\providecommand{\url}[1]{\texttt{#1}}
\expandafter\ifx\csname urlstyle\endcsname\relax
  \providecommand{\doi}[1]{doi: #1}\else
  \providecommand{\doi}{doi: \begingroup \urlstyle{rm}\Url}\fi

\bibitem[Ames et~al.(2019)Ames, Coogan, Egerstedt, Notomista, Sreenath, and Tabuada]{ames2019controlbarrier}
A.~D. Ames, S.~Coogan, M.~Egerstedt, G.~Notomista, K.~Sreenath, and P.~Tabuada.
\newblock Control barrier functions: Theory and applications.
\newblock In \emph{2019 18th European control conference (ECC)}, pages 3420--3431. Ieee, 2019.

\bibitem[Mitchell et~al.(2005)Mitchell, Bayen, and Tomlin]{mitchell2005timedependent}
I.~M. Mitchell, A.~M. Bayen, and C.~J. Tomlin.
\newblock A time-dependent hamilton-jacobi formulation of reachable sets for continuous dynamic games.
\newblock \emph{IEEE Transactions on automatic control}, 50\penalty0 (7):\penalty0 947--957, 2005.

\bibitem[Manchester and Slotine(2017)]{manchester2017control}
I.~R. Manchester and J.-J.~E. Slotine.
\newblock Control contraction metrics: Convex and intrinsic criteria for nonlinear feedback design.
\newblock \emph{IEEE Transactions on Automatic Control}, 62\penalty0 (6):\penalty0 3046--3053, 2017.

\bibitem[Chou et~al.(2022)Chou, Ozay, and Berenson]{chou2022safe}
G.~Chou, N.~Ozay, and D.~Berenson.
\newblock Safe output feedback motion planning from images via learned perception modules and contraction theory.
\newblock In \emph{International Workshop on the Algorithmic Foundations of Robotics}, pages 349--367. Springer, 2022.

\bibitem[Leeman et~al.(2026)Leeman, Zhan, Zeilinger, and Chou]{leeman2026vision}
A.~P. Leeman, S.~Zhan, M.~N. Zeilinger, and G.~Chou.
\newblock Vision-sls: Safe perception-based control from learned visual representations via system level synthesis.
\newblock \emph{arXiv preprint arXiv:2604.24894}, 2026.

\bibitem[Chou and Tedrake(2023)]{chou2023synthesizing}
G.~Chou and R.~Tedrake.
\newblock Synthesizing stable reduced-order visuomotor policies for nonlinear systems via sums-of-squares optimization.
\newblock In \emph{2023 62nd IEEE Conference on Decision and Control (CDC)}, pages 624--631. IEEE, 2023.

\bibitem[Dean et~al.(2021)Dean, Taylor, Cosner, Recht, and Ames]{dean2021guaranteeing}
S.~Dean, A.~Taylor, R.~Cosner, B.~Recht, and A.~Ames.
\newblock Guaranteeing safety of learned perception modules via measurement-robust control barrier functions.
\newblock In \emph{Conference on Robot Learning}, pages 654--670. PMLR, 2021.

\bibitem[Ha and Schmidhuber(2018)]{ha2018worldmodels}
D.~Ha and J.~Schmidhuber.
\newblock World models.
\newblock \emph{arXiv preprint arXiv:1803.10122}, 2\penalty0 (3):\penalty0 440, 2018.

\bibitem[Hafner et~al.(2020)Hafner, Lillicrap, Ba, and Norouzi]{hafner2020dreamer}
D.~Hafner, T.~Lillicrap, J.~Ba, and M.~Norouzi.
\newblock Dream to control: Learning behaviors by latent imagination.
\newblock In \emph{International Conference on Learning Representations}, 2020.
\newblock URL \url{https://openreview.net/forum?id=S1lOTC4tDS}.

\bibitem[Zhou et~al.(2025)Zhou, Pan, Lecun, and Pinto]{zhou2024dinowm}
G.~Zhou, H.~Pan, Y.~Lecun, and L.~Pinto.
\newblock {DINO}-{WM}: World models on pre-trained visual features enable zero-shot planning.
\newblock In \emph{Proceedings of the 42nd International Conference on Machine Learning}, volume 267 of \emph{Proceedings of Machine Learning Research}, pages 79115--79135. PMLR, 13--19 Jul 2025.

\bibitem[Hansen et~al.(2024)Hansen, Su, and Wang]{hansen2024tdmpc2}
N.~Hansen, H.~Su, and X.~Wang.
\newblock Td-mpc2: Scalable, robust world models for continuous control.
\newblock In \emph{International Conference on Learning Representations}, volume 2024, pages 47376--47405, 2024.

\bibitem[Castaneda et~al.(2023)Castaneda, Nishimura, McAllister, Sreenath, and Gaidon]{castaneda2023distribution}
F.~Castaneda, H.~Nishimura, R.~T. McAllister, K.~Sreenath, and A.~Gaidon.
\newblock In-distribution barrier functions: Self-supervised policy filters that avoid out-of-distribution states.
\newblock In \emph{Learning for Dynamics and Control Conference}, pages 286--299. PMLR, 2023.

\bibitem[Nakamura et~al.(2025)Nakamura, Peters, and Bajcsy]{nakamura2025generalizing}
K.~Nakamura, L.~Peters, and A.~Bajcsy.
\newblock Generalizing safety beyond collision-avoidance via latent-space reachability analysis.
\newblock \emph{arXiv preprint arXiv:2502.00935}, 2025.

\bibitem[Anderson et~al.(2019)Anderson, Doyle, Low, and Matni]{anderson2019system}
J.~Anderson, J.~C. Doyle, S.~H. Low, and N.~Matni.
\newblock System level synthesis.
\newblock \emph{Annual Reviews in Control}, 47:\penalty0 364--393, 2019.

\bibitem[Hafner et~al.(2019)Hafner, Lillicrap, Fischer, Villegas, Ha, Lee, and Davidson]{hafner2019planet}
D.~Hafner, T.~Lillicrap, I.~Fischer, R.~Villegas, D.~Ha, H.~Lee, and J.~Davidson.
\newblock Learning latent dynamics for planning from pixels.
\newblock In \emph{International conference on machine learning}, pages 2555--2565. PMLR, 2019.

\bibitem[Hafner et~al.(2021)Hafner, Lillicrap, Norouzi, and Ba]{hafner2021dreamerv2}
D.~Hafner, T.~P. Lillicrap, M.~Norouzi, and J.~Ba.
\newblock Mastering atari with discrete world models.
\newblock In \emph{International Conference on Learning Representations}, 2021.
\newblock URL \url{https://openreview.net/forum?id=0oabwyZbOu}.

\bibitem[Hafner et~al.(2023)Hafner, Pasukonis, Ba, and Lillicrap]{hafner2023dreamerv3}
D.~Hafner, J.~Pasukonis, J.~Ba, and T.~Lillicrap.
\newblock Mastering diverse domains through world models.
\newblock \emph{arXiv preprint arXiv:2301.04104}, 2023.

\bibitem[Hafner et~al.(2025)Hafner, Yan, and Lillicrap]{hafner2025dreamerv4}
D.~Hafner, W.~Yan, and T.~Lillicrap.
\newblock Training agents inside of scalable world models.
\newblock \emph{arXiv preprint arXiv:2509.24527}, 2025.

\bibitem[Hansen et~al.(2022)Hansen, Su, and Wang]{hansen2022tdmpc}
N.~A. Hansen, H.~Su, and X.~Wang.
\newblock Temporal difference learning for model predictive control.
\newblock In \emph{Proceedings of the 39th International Conference on Machine Learning}, volume 162 of \emph{Proceedings of Machine Learning Research}, pages 8387--8406. PMLR, 17--23 Jul 2022.

\bibitem[LeCun et~al.(2022)]{lecun2022path}
Y.~LeCun et~al.
\newblock A path towards autonomous machine intelligence version 0.9. 2, 2022-06-27.
\newblock \emph{Open Review}, 62\penalty0 (1):\penalty0 1--62, 2022.

\bibitem[Assran et~al.(2023)Assran, Duval, Misra, Bojanowski, Vincent, Rabbat, LeCun, and Ballas]{assran2023ijepa}
M.~Assran, Q.~Duval, I.~Misra, P.~Bojanowski, P.~Vincent, M.~Rabbat, Y.~LeCun, and N.~Ballas.
\newblock Self-supervised learning from images with a joint-embedding predictive architecture.
\newblock In \emph{Proceedings of the IEEE/CVF conference on computer vision and pattern recognition}, pages 15619--15629, 2023.

\bibitem[Bardes et~al.(2024)Bardes, Garrido, Ponce, Chen, Rabbat, LeCun, Assran, and Ballas]{bardes2024vjepa}
A.~Bardes, Q.~Garrido, J.~Ponce, X.~Chen, M.~Rabbat, Y.~LeCun, M.~Assran, and N.~Ballas.
\newblock Revisiting feature prediction for learning visual representations from video.
\newblock \emph{Transactions on Machine Learning Research}, 2024.
\newblock ISSN 2835-8856.
\newblock URL \url{https://openreview.net/forum?id=QaCCuDfBk2}.
\newblock Featured Certification.

\bibitem[Assran et~al.(2025)Assran, Bardes, Fan, Garrido, Howes, Muckley, Rizvi, Roberts, Sinha, Zholus, et~al.]{assran2025vjepa2}
M.~Assran, A.~Bardes, D.~Fan, Q.~Garrido, R.~Howes, M.~Muckley, A.~Rizvi, C.~Roberts, K.~Sinha, A.~Zholus, et~al.
\newblock V-jepa 2: Self-supervised video models enable understanding, prediction and planning.
\newblock \emph{arXiv preprint arXiv:2506.09985}, 2025.

\bibitem[Sobal et~al.(2026)Sobal, Zhang, Cho, Balestriero, Rudner, and LeCun]{sobal2025pldm}
U.~Sobal, W.~Zhang, K.~Cho, R.~Balestriero, T.~G. Rudner, and Y.~LeCun.
\newblock Learning from reward-free offline data: A case for planning with latent dynamics models.
\newblock \emph{Advances in Neural Information Processing Systems}, 38:\penalty0 43905--43941, 2026.

\bibitem[Oquab et~al.(2023)Oquab, Darcet, Moutakanni, Vo, Szafraniec, Khalidov, Fernandez, Haziza, Massa, El-Nouby, et~al.]{oquab2024dinov2}
M.~Oquab, T.~Darcet, T.~Moutakanni, H.~Vo, M.~Szafraniec, V.~Khalidov, P.~Fernandez, D.~Haziza, F.~Massa, A.~El-Nouby, et~al.
\newblock Dinov2: Learning robust visual features without supervision.
\newblock \emph{arXiv preprint arXiv:2304.07193}, 2023.

\bibitem[Micheli et~al.(2023)Micheli, Alonso, and Fleuret]{micheli2023iris}
V.~Micheli, E.~Alonso, and F.~Fleuret.
\newblock Transformers are sample-efficient world models.
\newblock In \emph{The Eleventh International Conference on Learning Representations}, 2023.
\newblock URL \url{https://openreview.net/forum?id=vhFu1Acb0xb}.

\bibitem[Alonso et~al.(2024)Alonso, Jelley, Micheli, Kanervisto, Storkey, Pearce, and Fleuret]{alonso2024diamond}
E.~Alonso, A.~Jelley, V.~Micheli, A.~Kanervisto, A.~Storkey, T.~Pearce, and F.~Fleuret.
\newblock Diffusion for world modeling: Visual details matter in atari.
\newblock \emph{Advances in Neural Information Processing Systems}, 37:\penalty0 58757--58791, 2024.

\bibitem[Micheli et~al.(2024)Micheli, Alonso, and Fleuret]{micheli2024deltairis}
V.~Micheli, E.~Alonso, and F.~Fleuret.
\newblock Efficient world models with context-aware tokenization.
\newblock In \emph{Forty-first International Conference on Machine Learning}, 2024.
\newblock URL \url{https://openreview.net/forum?id=BiWIERWBFX}.

\bibitem[Bruce et~al.(2024)Bruce, Dennis, Edwards, Parker-Holder, Shi, Hughes, Lai, Mavalankar, Steigerwald, Apps, et~al.]{bruce2024genie}
J.~Bruce, M.~D. Dennis, A.~Edwards, J.~Parker-Holder, Y.~Shi, E.~Hughes, M.~Lai, A.~Mavalankar, R.~Steigerwald, C.~Apps, et~al.
\newblock Genie: Generative interactive environments.
\newblock In \emph{Forty-first International Conference on Machine Learning}, 2024.

\bibitem[Maes et~al.(2026)Maes, Lidec, Scieur, LeCun, and Balestriero]{lewm}
L.~Maes, Q.~L. Lidec, D.~Scieur, Y.~LeCun, and R.~Balestriero.
\newblock Leworldmodel: Stable end-to-end joint-embedding predictive architecture from pixels.
\newblock \emph{arXiv preprint arXiv:2603.19312}, 2026.

\bibitem[Ames et~al.(2016)Ames, Xu, Grizzle, and Tabuada]{ames2017controlbarrier}
A.~D. Ames, X.~Xu, J.~W. Grizzle, and P.~Tabuada.
\newblock Control barrier function based quadratic programs for safety critical systems.
\newblock \emph{IEEE Transactions on Automatic Control}, 62\penalty0 (8):\penalty0 3861--3876, 2016.

\bibitem[Fisac et~al.(2018)Fisac, Akametalu, Zeilinger, Kaynama, Gillula, and Tomlin]{fisac2019general}
J.~F. Fisac, A.~K. Akametalu, M.~N. Zeilinger, S.~Kaynama, J.~Gillula, and C.~J. Tomlin.
\newblock A general safety framework for learning-based control in uncertain robotic systems.
\newblock \emph{IEEE Transactions on Automatic Control}, 64\penalty0 (7):\penalty0 2737--2752, 2018.

\bibitem[Wabersich and Zeilinger(2021)]{wabersich2021predictive}
K.~P. Wabersich and M.~N. Zeilinger.
\newblock A predictive safety filter for learning-based control of constrained nonlinear dynamical systems.
\newblock \emph{Automatica}, 129:\penalty0 109597, 2021.

\bibitem[Wabersich et~al.(2023)Wabersich, Taylor, Choi, Sreenath, Tomlin, Ames, and Zeilinger]{wabersich2023datadrivensafetyfilters}
K.~P. Wabersich, A.~J. Taylor, J.~J. Choi, K.~Sreenath, C.~J. Tomlin, A.~D. Ames, and M.~N. Zeilinger.
\newblock Data-driven safety filters: Hamilton-jacobi reachability, control barrier functions, and predictive methods for uncertain systems.
\newblock \emph{IEEE Control Systems Magazine}, 43\penalty0 (5):\penalty0 137--177, 2023.

\bibitem[Robey et~al.(2020)Robey, Hu, Lindemann, Zhang, Dimarogonas, Tu, and Matni]{robey2020learningcbfs}
A.~Robey, H.~Hu, L.~Lindemann, H.~Zhang, D.~V. Dimarogonas, S.~Tu, and N.~Matni.
\newblock Learning control barrier functions from expert demonstrations.
\newblock In \emph{2020 59th IEEE Conference on Decision and Control (CDC)}, pages 3717--3724. Ieee, 2020.

\bibitem[Lindemann et~al.(2024)Lindemann, Robey, Jiang, Das, Tu, and Matni]{lindemann2024robustoutputcbfs}
L.~Lindemann, A.~Robey, L.~Jiang, S.~Das, S.~Tu, and N.~Matni.
\newblock Learning robust output control barrier functions from safe expert demonstrations.
\newblock \emph{IEEE Open Journal of Control Systems}, 3:\penalty0 158--172, 2024.

\bibitem[Kang et~al.(2022)Kang, Gradu, Choi, Janner, Tomlin, and Levine]{kang2022lyapunovdensity}
K.~Kang, P.~Gradu, J.~J. Choi, M.~Janner, C.~Tomlin, and S.~Levine.
\newblock Lyapunov density models: Constraining distribution shift in learning-based control.
\newblock In \emph{International Conference on Machine Learning}, pages 10708--10733. PMLR, 2022.

\bibitem[Faroni et~al.(2025)Faroni, Odesco, Zanchettin, and Rocco]{faroni2025uncertainty}
M.~Faroni, C.~Odesco, A.~M. Zanchettin, and P.~Rocco.
\newblock Uncertainty-aware planning with inaccurate models for robotized liquid handling.
\newblock In \emph{2025 IEEE/RSJ International Conference on Intelligent Robots and Systems (IROS)}, pages 17725--17731. IEEE, 2025.

\bibitem[Cao et~al.(2025)Cao, Bloch, and Coogan]{cao2025optimistic}
M.~E. Cao, M.~Bloch, and S.~Coogan.
\newblock An optimistic approach to cost-aware predictive control.
\newblock \emph{Automatica}, 176:\penalty0 112263, 2025.

\bibitem[Chou et~al.(2021)Chou, Ozay, and Berenson]{chou2021model}
G.~Chou, N.~Ozay, and D.~Berenson.
\newblock Model error propagation via learned contraction metrics for safe feedback motion planning of unknown systems.
\newblock In \emph{2021 60th IEEE Conference on Decision and Control (CDC)}, pages 3576--3583. IEEE, 2021.

\bibitem[Knuth et~al.(2021)Knuth, Chou, Ozay, and Berenson]{knuth2021planning}
C.~Knuth, G.~Chou, N.~Ozay, and D.~Berenson.
\newblock Planning with learned dynamics: Probabilistic guarantees on safety and reachability via lipschitz constants.
\newblock \emph{IEEE Robotics and Automation Letters}, 6\penalty0 (3):\penalty0 5129--5136, 2021.

\bibitem[Knuth et~al.(2023)Knuth, Chou, Reese, and Moore]{knuth2023statistical}
C.~Knuth, G.~Chou, J.~Reese, and J.~Moore.
\newblock Statistical safety and robustness guarantees for feedback motion planning of unknown underactuated stochastic systems.
\newblock In \emph{2023 IEEE International Conference on Robotics and Automation (ICRA)}, pages 12700--12706. IEEE, 2023.

\bibitem[Suh et~al.(2023)Suh, Chou, Dai, Yang, Gupta, and Tedrake]{suh2023fighting}
H.~T. Suh, G.~Chou, H.~Dai, L.~Yang, A.~Gupta, and R.~Tedrake.
\newblock Fighting uncertainty with gradients: Offline reinforcement learning via diffusion score matching.
\newblock In \emph{Conference on Robot Learning}, pages 2878--2904. PMLR, 2023.

\bibitem[Dodeja et~al.(2025)Dodeja, Schmeckpeper, Vats, Weng, Jia, Konidaris, and Tellex]{dodeja2025accelerating}
L.~Dodeja, K.~Schmeckpeper, S.~Vats, T.~Weng, M.~Jia, G.~Konidaris, and S.~Tellex.
\newblock Accelerating residual reinforcement learning with uncertainty estimation.
\newblock \emph{IEEE Robotics and Automation Letters}, 11\penalty0 (1):\penalty0 970--977, 2025.

\bibitem[Chua et~al.(2018)Chua, Calandra, McAllister, and Levine]{chua2018deep}
K.~Chua, R.~Calandra, R.~McAllister, and S.~Levine.
\newblock Deep reinforcement learning in a handful of trials using probabilistic dynamics models.
\newblock \emph{Advances in neural information processing systems}, 31, 2018.

\bibitem[Seo et~al.(2025)Seo, Nakamura, and Bajcsy]{seo2025unisafe}
J.~Seo, K.~Nakamura, and A.~Bajcsy.
\newblock Uncertainty-aware latent safety filters for avoiding out-of-distribution failures.
\newblock \emph{arXiv preprint arXiv:2505.00779}, 2025.

\bibitem[Nakamura et~al.(2025)Nakamura, Bishop, Man, Johnson, Manchester, and Bajcsy]{nakamura2025latentcbf}
K.~Nakamura, A.~L. Bishop, S.~Man, A.~M. Johnson, Z.~Manchester, and A.~Bajcsy.
\newblock How to train your latent control barrier function: Smooth safety filtering under hard-to-model constraints.
\newblock \emph{arXiv preprint arXiv:2511.18606}, 2025.

\bibitem[Anand and Kolathaya(2025)]{anand2025safetycertification}
M.~Anand and S.~Kolathaya.
\newblock Safety certification in the latent space using control barrier functions and world models.
\newblock \emph{arXiv preprint arXiv:2507.13871}, 2025.

\bibitem[Sun and Song(2026)]{sun2025latentpolicybarrierlearning}
Z.~Sun and S.~Song.
\newblock Latent policy barrier: Learning robust visuomotor policies by staying in-distribution.
\newblock \emph{Advances in Neural Information Processing Systems}, 38:\penalty0 174280--174305, 2026.

\bibitem[Mao et~al.(2024)Mao, Sobolewski, and Ruchkin]{mao2024safe}
Z.~Mao, C.~Sobolewski, and I.~Ruchkin.
\newblock How safe am i given what i see? calibrated prediction of safety chances for image-controlled autonomy.
\newblock In \emph{6th Annual Learning for Dynamics \& Control Conference}, pages 1370--1387. PMLR, 2024.

\bibitem[Balestriero and LeCun(2025)]{balestriero2025lejepa}
R.~Balestriero and Y.~LeCun.
\newblock Lejepa: Provable and scalable self-supervised learning without the heuristics.
\newblock \emph{arXiv preprint arXiv:2511.08544}, 2025.

\bibitem[Wang et~al.(2026)Wang, Bounou, Zhou, Balestriero, Rudner, LeCun, and Ren]{wang2026straightening}
Y.~Wang, O.~Bounou, G.~Zhou, R.~Balestriero, T.~G. Rudner, Y.~LeCun, and M.~Ren.
\newblock Temporal straightening for latent planning.
\newblock \emph{arXiv preprint arXiv:2603.12231}, 2026.

\bibitem[Tibshirani et~al.(2019)Tibshirani, Foygel~Barber, Candes, and Ramdas]{tibshirani2019conformal}
R.~J. Tibshirani, R.~Foygel~Barber, E.~Candes, and A.~Ramdas.
\newblock Conformal prediction under covariate shift.
\newblock \emph{Advances in neural information processing systems}, 32, 2019.

\bibitem[Lei and Wasserman(2014)]{lei2014distribution}
J.~Lei and L.~Wasserman.
\newblock Distribution-free prediction bands for non-parametric regression.
\newblock \emph{Journal of the Royal Statistical Society Series B: Statistical Methodology}, 76\penalty0 (1):\penalty0 71--96, 2014.

\bibitem[Fang and Chou(2026)]{fang2026safe}
J.~Fang and G.~Chou.
\newblock Safe large-scale robust nonlinear mpc in milliseconds via reachability-constrained system level synthesis on the gpu.
\newblock \emph{arXiv preprint arXiv:2604.07644}, 2026.

\bibitem[Srinivasan et~al.(2026)Srinivasan, Leeman, and Chou]{srinivasan2026safety}
A.~Srinivasan, A.~Leeman, and G.~Chou.
\newblock Safety beyond the training data: Robust out-of-distribution mpc via conformalized system level synthesis.
\newblock In \emph{8th Annual Learning for Dynamics and Control Conference}, 2026.

\bibitem[Tassa et~al.(2018)Tassa, Doron, Muldal, Erez, Li, Casas, Budden, Abdolmaleki, Merel, Lefrancq, et~al.]{tassa2018deepmind}
Y.~Tassa, Y.~Doron, A.~Muldal, T.~Erez, Y.~Li, D.~d.~L. Casas, D.~Budden, A.~Abdolmaleki, J.~Merel, A.~Lefrancq, et~al.
\newblock Deepmind control suite.
\newblock \emph{arXiv preprint arXiv:1801.00690}, 2018.

\bibitem[Park et~al.(2025)Park, Frans, Eysenbach, and Levine]{park2025ogbench}
S.~Park, K.~Frans, B.~Eysenbach, and S.~Levine.
\newblock Ogbench: Benchmarking offline goal-conditioned rl.
\newblock In \emph{International Conference on Learning Representations}, volume 2025, pages 94937--94982, 2025.

\bibitem[Li and Todorov(2004)]{li2004iterative}
W.~Li and E.~Todorov.
\newblock Iterative linear quadratic regulator design for nonlinear biological movement systems.
\newblock In \emph{First International Conference on Informatics in Control, Automation and Robotics}, volume~2, pages 222--229. SciTePress, 2004.

\bibitem[Van~der Maaten and Hinton(2008)]{van2008visualizing}
L.~Van~der Maaten and G.~Hinton.
\newblock Visualizing data using t-sne.
\newblock \emph{Journal of machine learning research}, 2008.

\bibitem[Williams et~al.(2017)Williams, Aldrich, and Theodorou]{williams2017model}
G.~Williams, A.~Aldrich, and E.~A. Theodorou.
\newblock Model predictive path integral control: From theory to parallel computation.
\newblock \emph{Journal of Guidance, Control, and Dynamics}, 40\penalty0 (2):\penalty0 344--357, 2017.

\bibitem[Tedrake and the Drake Development~Team(2019)]{drake}
R.~Tedrake and the Drake Development~Team.
\newblock Drake: Model-based design and verification for robotics, 2019.
\newblock URL \url{https://drake.mit.edu}.

\end{thebibliography}

\newpage
\appendix

\setcounter{theorem}{0}
\renewcommand{\thetheorem}{\Alph{section}.\arabic{theorem}}

\begin{center}
    \Large \textbf{Appendices}
\end{center}

In the following, we provide an overview of our appendices. In App. \ref{app:training_details}, we provide training details for the latent world models, including the offline data collection procedure, shared HDF5 dataset format, optimization settings, and task-specific latent dynamics network architectures. In App. \ref{app:sls}, we review the system level synthesis (SLS) formulation used to compute robust feedback policies and reachable tubes for the learned latent dynamics, and describe how nominal constraints are tightened to account for calibrated model error. In App. \ref{app:proofs}, we prove the probabilistic containment guarantee for the conformal SLS tubes. In App. \ref{app:exp_details}, we provide experimental implementation details, including the benchmark setup, nominal planning objectives, world-model baselines, implementation of Problem \eqref{eq:cp_sls_mpc}, calibrated model-error bounds, latent constraint classifiers, in-distribution latent support constraints, safety baselines, and the real-world rope manipulation setup. In App. \ref{app:extend_results}, we provide additional experimental results, including nominal trajectory time-lapses, safe versus unsafe trajectory comparisons, latent reachable-tube visualizations, and expanded quantitative tables for nominal planning and robust task-completion baselines.

\section{Training Details}\label{app:training_details}

\paragraph{Offline Dataset}

We train the latent world model from offline trajectories collected independently for each task. Each trajectory consists of a sequence of image observations and actions,
\begin{equation}
    \tau = \{(o_t, a_t)\}_{t=0}^{T-1},
\end{equation}
where $o_t$ is a rendered RGB observation and $a_t$ is the action applied between consecutive observations. The datasets are collected before world model training and no online interaction is used during model learning. The world model is trained only from the observation-action sequences, while additional simulator states, rewards, and task-specific quantities are stored for analysis, visualization, and evaluation.

Across environments, we use task-specific data-generation policies to produce trajectories with sufficient interaction coverage. For expert-control tasks, such as Reacher, trajectories are collected by rolling out a trained expert policy. For manipulation tasks that require structured behavior, such as OGBench Cube, trajectories are generated using an oracle planner that constructs task-space keyframes and executes the corresponding control sequence. For the two-manipulator rope environment, we use randomized task-space trajectory generators, including spline- and waypoint-based policies, to cover a diverse range of rope configurations and manipulator motions. This produces datasets that are not tied to a single evaluation goal, but instead expose the world model to local dynamics over a broad set of feasible interactions.

All datasets are stored in a common HDF5 format. For each episode, we store the rendered pixel observations, applied actions, episode length, episode offset, episode seed, and per-step episode/step indices. Actions are padded with a final invalid entry so that the action array has the same temporal length as the observation array; this makes it straightforward to sample fixed-length training windows while ignoring the last action of each episode. Episodes shorter than a minimum length are discarded. The collection loop either runs for a fixed number of episodes or until a specified target number of transitions is reached. This common storage format allows the same world-model training pipeline to be reused across all environments.

\paragraph{Training the World Model}

Across experiments, we train the model using AdamW with learning rate $5\times10^{-5}$, weight decay $10^{-3}$, mixed-precision training, and gradient clipping with maximum norm $1.0$. Model checkpoints are saved after each epoch, and the same general training pipeline is used across all tasks, with only task-dependent choices such as latent dimension, Markov order, rollout horizon, and loss weights adjusted for each environment.

\paragraph{MLP Latent Dynamics Models}
All four tasks use the same vision backbone: a tiny ViT encoder on $224\times224$ RGB images with
patch size $14$.  The encoder output is projected to a task-specific latent $z_t\in\mathbb{R}^{n_z}$.
The dynamics model then forms a Markov latent state
\[
s_t=[z_t,\Delta z_t,\ldots,\Delta^K z_t]\in\mathbb{R}^{(K+1)n_z},
\]
where $K$ is the desired Markov order.  A feed-forward MLP predicts the next Markov
state from the current Markov state and action,
\[
\hat s_{t+1}=f_{\phi}(s_t,u_t).
\]
Here the listed action dimension is the number of control channels concatenated into the predictor
input; since all runs use frameskip $1$, this is also the effective action dimension.  Reacher and
Push-T use two-dimensional actions, Rope uses a three-dimensional endpoint action, and OGBench Cube
uses the five-dimensional environment action vector.  The predictor is trained by autoregressively
rolling this one-step MLP for \texttt{num\_preds}=5 future steps.

\begin{table}[h]
\centering
\small
\caption{Model architecture and state dimensions across tasks.}
\begin{tabular}{lcccccc}
\toprule
Task & $n_z$ & $K$ & $\dim(s_t)$ & $\dim(v_t)$ & MLP Size & Straightening \\
\midrule
OGBench-Cube & $12$ & $1$ & $24$ & $5$ & $512{\times}2$ & yes \\
Reacher & $5$ & $1$ & $10$ & $2$ & $512{\times}2$ & no \\
Push-T & $48$ & $2$ & $144$ & $2$ & $512{\times}3$ & no \\
Rope & $12$ & $1$ & $24$ & $3$ & $512{\times}2$ & yes \\
\bottomrule
\end{tabular}
\label{tab:model_architecture_dims}
\end{table}

The MLP column reports hidden width times number of hidden layers, with GELU nonlinearities and a
linear output layer.  The MLP size column only considers the latent dynamics predictor, excluding the
ViT encoder and latent projector.  The SIGReg term uses the same sketch size in all runs: $17$
quadrature knots and $1024$ random projections, weighted by
$\lambda_{\mathrm{SIG}}=0.005$.  Temporal straightening is active only for the OGBench Cube and Rope
runs.

\section{System Level Synthesis}\label{app:sls}

\paragraph{System Level Synthesis.}
We use state-feedback SLS to compute robust feedback policies and reachable tubes for the learned latent dynamics. Let $s_k\in\mathbb{R}^{n_s}$ be the Markov latent state and $u_k\in\mathbb{R}^{n_u}$ the control. Around a nominal trajectory $(\zeta_{0:T},v_{0:T-1})$ satisfying $\zeta_{k+1}=f_\phi(\zeta_k,v_k)$, we linearize
\[
    \Delta s_{k+1}=A_k\Delta s_k+B_k\Delta u_k+E_k w_k,\qquad
    w_k\in\mathcal{B}^{n_s},
\]
where $A_k=\nabla_s f_\phi(\zeta_k,v_k)$, $B_k=\nabla_u f_\phi(\zeta_k,v_k)$, and $E_k\mathcal{B}^{n_s}$ is a calibrated latent prediction-error set.

SLS parameterizes the closed-loop system by response matrices $\Phi^s_{k,j}$ and $\Phi^u_{k,j}$ such that
\[
    s_k-\zeta_k=\sum_{j=0}^{k-1}\Phi^s_{k,j}w_j,\qquad
    u_k-v_k=\sum_{j=0}^{k-1}\Phi^u_{k,j}w_j .
\]
These responses are valid if they satisfy the finite-horizon SLS constraints
\[
    \Phi^s_{k+1,j}=A_k\Phi^s_{k,j}+B_k\Phi^u_{k,j},\quad j<k,
    \qquad
    \Phi^s_{k+1,k}=E_k .
\]
Then defining $\oplus$ and $\bigoplus$ as the Minkowski sum of two sets (i.e., $\mathcal{A} \oplus \mathcal{B} = \{a + b\mid a \in\mathcal{A}, b\in\mathcal{B}\}$ for sets, $\mathcal{A}$ and $\mathcal{B}$), the corresponding reachable tubes are
\[
    \mathcal{R}_k^s=\zeta_k\oplus\bigoplus_{j=0}^{k-1}\Phi^s_{k,j}\mathcal{B}^{n_s},
    \qquad
    \mathcal{R}_k^u=v_k\oplus\bigoplus_{j=0}^{k-1}\Phi^u_{k,j}\mathcal{B}^{n_s}.
\]
For constraints $\mathcal{F}:=\{(s,u)\mid g_i(s,u)+b_i\le0,\ i=1,\ldots,n_c\}$, we impose robust satisfaction by tightening the nominal constraints. Linearizing $g_i$ around $(\zeta_k,v_k)$ gives the sufficient condition
\[
    g_i(\zeta_k,v_k)+b_i+
    \sum_{j=0}^{k-1}
    \left\|
        \nabla_s g_i(\zeta_k,v_k)\Phi^s_{k,j}
        +
        \nabla_u g_i(\zeta_k,v_k)\Phi^u_{k,j}
    \right\|_2
    \le0 ,
\]
for all $i$ and $k$. The SLS MPC problem jointly optimizes $(\zeta,v,\Phi^s,\Phi^u)$ subject to nominal dynamics, the SLS response constraints, and these tightened constraints.

\section{Proofs}\label{app:proofs}

\begin{theorem}[Probabilistic containment of conformal SLS tubes]
\label{thm:cp_sls_latent_tube}
Consider the latent closed-loop dynamics
$s_{k+1}=f_\phi(s_k,u_k)+e_k$,
for $k=0,\ldots,T-1$.
Let $q$ be the split conformal quantile of the calibration scores
$r_i
    =
    e_i^\top \Sigma_\psi(s_i,u_i)^{-1}e_i$ and
    $e_i:=s_i^+-f_\phi(s_i,u_i)$, 
computed at per-step miscoverage $\bar\delta=\delta/T$. For each closed-loop
transition, define the test score
$r_k^{\mathrm{test}} := e_k^\top \Sigma_\psi(s_k,u_k)^{-1}e_k$. 
Assume that, for each $k$, $r_k^{\mathrm{test}}$ is exchangeable with the calibration
scores. Let $\Sigma_\psi(s,u)=L_\psi(s,u)L_\psi(s,u)^\top$ and define the local
conformal error set
$\mathcal{E}(s,u)
    :=
    \sqrt{q}\,L_\psi(s,u)\mathcal{B}^{n_s}$ as defined in \eqref{eq:cp_set_prediction}. 
Suppose the SLS controller is synthesized with disturbance matrices $E_k$ such that
the realized conformal error sets are contained in the SLS disturbance sets,
$\mathcal{E}(s_k,u_k)\subseteq E_k\mathcal{B}^{n_s}$, for $k=0,\ldots,T-1$. 
If the SLS controller is robustly correct for all disturbances
$w_k\in\mathcal{B}^{n_s}$ in
$\Delta s_{k+1}
    =
    A_k\Delta s_k+B_k\Delta u_k+E_kw_k$, where $\Delta s_k := s_k - \zeta_k$ and $\Delta u_k := u_k - v_k$, 
then
$\mathbb{P}\left[
        (s_k\in\mathcal{R}_k^s,\;
        \forall k=0,\ldots,T) \wedge\;
        (u_k\in\mathcal{R}_k^u,\;
        \forall k=0,\ldots,T-1)
    \right]
    \ge 1-\delta$.
\end{theorem}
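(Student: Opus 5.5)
The argument has three steps: per-step conformal coverage, a union bound over the horizon, and a deterministic containment step that uses the assumed robust correctness of the SLS controller.

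\emph{Step 1 (per-step coverage).} Fix $k\in\{0,\ldots,T-1\}$. By assumption, $r_k^{\mathrm{test}}$ is exchangeable with the $n$ calibration scores. The standard split conformal argument therefore applies: the rank of $r_k^{\mathrm{test}}$ among the $n+1$ scores is uniform up to ties, so with $q=r_{(\lceil (n+1)(1-\bar\delta)\rceil)}$ we get
\[
\mathbb{P}\bigl[r_k^{\mathrm{test}}\le q\bigr]\ge 1-\bar\delta .
\]
The convention $q=+\infty$ covers the case where the index exceeds $n$. Write $\Sigma_\psi=L_\psi L_\psi^\top$. Then $e^\top\Sigma_\psi^{-1}e\le q$ is equivalent to $\|L_\psi^{-1}e\|_2\le\sqrt q$, i.e.\ $e\in\sqrt q\,L_\psi\mathcal{B}^{n_s}=\mathcal{E}(s_k,u_k)$. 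So the event $G_k:=\{e_k\in\mathcal{E}(s_k,u_k)\}$ has probability at least $1-\delta/T$.

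\emph{Step 2 (union bound).} The closed-loop errors are generated sequentially and are not independent, so I will not multiply probabilities. Instead I apply a union bound to the complements:
\[
\mathbb{P}\Bigl[\bigcap_{k=0}^{T-1}G_k\Bigr]\ge 1-\sum_{k=0}^{T-1}\mathbb{P}[G_k^c]\ge 1-T\bar\delta=1-\delta .
\]
This step only needs the marginal per-step guarantee at each $k$, which is exactly what the exchangeability assumption gives.

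\emph{Step 3 (deterministic containment on the good event).} On $\bigcap_k G_k$, the hypothesis $\mathcal{E}(s_k,u_k)\subseteq E_k\mathcal{B}^{n_s}$ gives, for every $k$, some $w_k\in\mathcal{B}^{n_s}$ with $e_k=E_kw_k$. The realized closed-loop trajectory is then a trajectory of the disturbed system driven by an admissible disturbance sequence $\{w_k\}$. By the assumed robust correctness of the SLS controller, the deviations satisfy the system-response representation
\[
s_k-\zeta_k=\sum_{j<k}\Phi^s_{k,j}w_j,\qquad u_k-v_k=\sum_{j<k}\Phi^u_{k,j}w_j .
\]
Since each $w_j\in\mathcal{B}^{n_s}$, each summand lies in $\Phi^s_{k,j}\mathcal{B}^{n_s}$, respectively $\Phi^u_{k,j}\mathcal{B}^{n_s}$. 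Therefore $s_k\in\mathcal{R}_k^s$ for $k=0,\ldots,T$ (at $k=0$ we have $s_0=\zeta_0$) and $u_k\in\mathcal{R}_k^u$ for $k=0,\ldots,T-1$. The desired event thus contains $\bigcap_k G_k$, and Step 2 yields the stated bound of at least $1-\delta$.

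\emph{Main obstacle.} The delicate point is Step 3. It must be made precise what ``robustly correct'' means, given that the true dynamics are $f_\phi(s_k,u_k)+e_k$ rather than the linearization. I will read the hypothesis as saying that the synthesized feedback keeps the true trajectory inside the tubes whenever every realized error lies in $E_k\mathcal{B}^{n_s}$. That is, any linearization residual is assumed to be absorbed into the controller's robust-correctness guarantee, so the linear system-response identity above is exactly the content of that hypothesis. Under this reading Step 3 becomes a direct set-membership check, and the probabilistic content is confined entirely to Steps 1 and 2.
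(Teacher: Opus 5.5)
Your proposal is correct and follows the paper's proof essentially step for step: per-step split-CP coverage at level $1-\delta/T$, the equivalence $r_k^{\mathrm{test}}\le q \iff e_k\in\mathcal{E}(s_k,u_k)$, a union bound over the $T$ transitions, and then the containment $\mathcal{E}(s_k,u_k)\subseteq E_k\mathcal{B}^{n_s}$ to write $e_k=E_kw_k$ and invoke robust correctness of the SLS controller. Your explicit reading of ``robustly correct'' is that it absorbs the linearization residual, so that the system-response identity holds for the true trajectory; this states openly an assumption the paper's proof uses only implicitly.
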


\begin{proof}
By split CP, exchangeability gives
\[
    \mathbb{P}\left[r_k^{\mathrm{test}}\le q\right]\ge 1-\bar\delta
\]
for each $k$. Since
\[
    r_k^{\mathrm{test}}\le q
    \quad\Longleftrightarrow\quad
    e_k\in
    \sqrt{q}\,L_\psi(s_k,u_k)\mathcal{B}^{n_s}
    =
    \mathcal{E}(s_k,u_k),
\]
a union bound over the $T$ transitions yields
\[
    \mathbb{P}\left[
        e_k\in\mathcal{E}(s_k,u_k),
        \ \forall k=0,\ldots,T-1
    \right]
    \ge
    1-T\bar\delta
    =
    1-\delta .
\]
On this event, the assumed containment
$\mathcal{E}(s_k,u_k)\subseteq E_k\mathcal{B}^{n_s}$ implies that each realized
prediction error can be written as
\[
    e_k=E_kw_k,
    \qquad
    w_k\in\mathcal{B}^{n_s}.
\]
Thus the realized closed-loop trajectory is generated by a disturbance sequence inside
the uncertainty set used by SLS. Robust correctness of SLS then implies
$s_k\in\mathcal{R}_k^s$ for $k=0,\ldots,T$ and
$u_k\in\mathcal{R}_k^u$ for $k=0,\ldots,T-1$. Therefore the tube-containment event
holds with probability at least $1-\delta$.
\end{proof}

\section{Experimental Implementation}\label{app:exp_details}
In the following sections, we discuss the experimental implementations for two of our central contributions: our \textbf{\textit{nominal planner}} and our \textbf{\textit{robust planner}}. Section \ref{app:exp_setup} discusses the experimental setup for our nominal experiments. Then, Section \ref{sec:nominal_planning_setup} discusses the implementation details of our \textbf{novel} gradient-based iLQR planner, which leverages the Markovian latent space to enable high task success rates (cf. Table \ref{tab:reacher_ablation_combined} for the ablation study). Next, Section \ref{sec:nominal_planning_baselines} details the baseline implementations used to evaluate our nominal planner against. 
For the safety experiments, Section \ref{app:sls_squared_implement} provides details on how our robust planner, \method, is implemented, and Section \ref{app:safety_baselines} provides details on the safety baselines we compare our approach to. Lastly, Section \ref{app:hardware_implementation} discusses our real-world hardware experiment.

\subsection{Experiment Setup}\label{app:exp_setup}

We evaluate on four continuous-control benchmarks spanning low-dimensional reaching,
rigid-object manipulation, planar pushing, and deformable-object manipulation.
For each task, we train from expert demonstrations and evaluate planning using
task-specific goal-reaching criteria.

\paragraph{Reacher.}
Reacher is the hard variant of the DeepMind Control Suite Reacher task, where a
two-joint planar arm must reach a target position. We render $224 \times 224$ RGB
observations with the target hidden from the image observation. The dataset
contains 24,650 expert episodes collected with a Soft Actor-Critic policy.
Episodes are collected at 50 Hz for up to 100 control steps and terminate early
when the end effector reaches within $0.03$ of the target.

\paragraph{Cube.}
Cube is the OGBench \texttt{cube-single-v0} task, where a robot arm must grasp and
move a cube to a target 3D position and yaw. We use front-camera pixel
observations and continuous 5D actions. The dataset contains 11,227 oracle-planner
episodes recorded at 20 Hz for up to 100 steps. Episodes terminate when the cube
is within $0.04$ m of the target position and within $0.2$ rad of the target yaw.

\paragraph{Push-T.}
Push-T is a 2D planar manipulation task where a circular pusher moves a T-shaped
block to a target pose. We use \texttt{gym-pusht/PushT-v0} and store RGB
observations, pusher positions, and continuous 2D actions. The dataset contains
25,000 expert episodes: 20,000 nominal diffusion-policy rollouts and 5,000
edge-case rollouts. Episodes contain up to 167 recorded steps.

\paragraph{Rope.}
Rope is a bimanual MuJoCo manipulation environment where two robot arms control
the endpoints of a deformable rope. The task exposes the model to nonlinear rope
dynamics by moving the rope attachments along randomized task-space trajectories.
We collect 15,000 episodes using a cubic-spline policy with six randomized
waypoints. Episodes are recorded at 20 Hz, average approximately 80 steps, and
are rendered as $224 \times 224$ RGB observations.

\paragraph{Planning success criteria.}
For Reacher, success is measured by the wrapped joint-position distance between
the current and goal configurations. The primary threshold is $0.1$ in qpos
distance, and we also report a stricter full-observation threshold of $0.05$ as a
diagnostic. For Cube, success requires the cube position to be within $0.04$ m of
the goal position. For Push-T, success requires the T-block to be within $20$
pixels of the goal position with yaw error at most $0.35$ rad. For Rope, success
requires the task-target distance to reach the dataset goal tolerance of
$0.05$.

\subsection{Nominal Planning Setup}\label{sec:nominal_planning_setup}
The nominal MPC planners optimize in the learned Markov latent state.  For Reacher, OGBench-Cube,
and Rope, the nominal planner is receding-horizon iLQR with horizon $T=15$, but the quadratic cost
matrices are task-specific (where weights scale the identity matrix $I_n \in \mathbb{R}^{n \times n}$):

\begin{table}[h]
\centering
\small
\caption{\textbf{Reacher, OGBench-Cube, Rope: }Nominal iLQR planning horizons and cost weights.}
\begin{tabular}{lcccc}
\toprule
Task & $T$ & $Q_{T}$ & $Q$ & $R$ \\
\midrule
Reacher & $15$ & $5.0I_{10}$ & $0.05I_{10}$ & $0.1I_2$ \\
OGBench-Cube & $15$ & $15.0I_{24}$ & $0.05I_{24}$ & $0.5I_5$ \\
Rope & $15$ & $15.0I_{24}$ & $0.005I_{24}$ & $0.01I_3$ \\
\bottomrule
\end{tabular}
\label{tab:ilqr_nominal_weights}
\end{table}

For these iLQR-only planners, the rollout is generated by the learned one-step dynamics
$\zeta_{t+1}=f_{\phi}(\zeta_t,v_t)$, and the open-loop control sequence
$\mathbf{v}$ is optimized with the finite-horizon cost
\begin{equation}
J_{\mathrm{iLQR}}(\bm{\zeta},\mathbf{v})
=
\sum_{t=0}^{T-1}
\left(
\|\zeta_t-\zeta_g\|_{Q}^2
+
\|v_t\|_{R}^2
\right)
+
\|\zeta_T-\zeta_g\|_{Q_T}^2,
\end{equation}
where $\zeta_g$ is the goal latent state. In Reacher, the same expression is used
but the cost is computed only on the zeroth-order latent coordinates (i.e., $z$), so the cost does not directly
penalize the finite-difference components of the Markov state. At each environment step, iLQR
linearizes the learned dynamics along the current rollout, performs a Riccati-style backward pass to
compute local feedback/feedforward updates, line-searches the updated control sequence, executes the
first action, and replans from the newly observed image.

For Push-T, the nominal planner uses MPPI to produce a long-horizon warm-start and then uses a
short-horizon iLQR tracker to refine the first segment. The MPPI horizon is $H^{\mathrm{mppi}}=45$, while the
iLQR tracking horizon is $T=15$. The MPPI cost matrices are
$Q_{T}^{\mathrm{mppi}}=10.0I_{144}$,
$Q^{\mathrm{mppi}}=0.05I_{144}$, and
$R^{\mathrm{mppi}}=0.01I_{2}$; the iLQR tracking matrices are
$Q_{T}=10.0I_{144}$,
$Q=1.0I_{144}$, and
$R=0.01I_{2}$. MPPI samples noisy control sequences, rolls them out
through the same latent dynamics, and scores each sampled trajectory with
\begin{equation}\label{eq:nom_mppi_cost}
J_{\mathrm{MPPI}}(\bm{\zeta},\mathbf{v})
=
\sum_{t=0}^{H^{\mathrm{mppi}}-1}
\left(
\|\zeta_t-\zeta_g\|_{Q^{\mathrm{mppi}}}^2
+
\|v_t\|_{R^{\mathrm{mppi}}}^2
\right)
+
\|\zeta_{H^{\mathrm{mppi}}}-\zeta_g\|_{Q_T^{\mathrm{mppi}}}^2.
\end{equation}
The sampled costs are converted to rewards by negation and weighted by the MPPI reward temperature
when updating the nominal control sequence. The iLQR tracker then optimizes over the shorter
horizon $T=15$ against the MPPI reference trajectory $\bar{\bm{\zeta}}$ rather than a fixed goal:
\begin{equation}
J_{\mathrm{track}}(\bm{\zeta},\mathbf{v})
=
\sum_{t=0}^{T-1}
\left(
\|\zeta_t-\bar{\zeta}_t\|_{Q}^2
+
\|v_t\|_{R}^2
\right)
+
\|\zeta_T-\bar{\zeta}_T\|_{Q_T}^2.
\end{equation}
Thus the MPPI stage adds a sampling-based global search over long control sequences, while the iLQR
stage locally smooths and tracks the selected latent trajectory before receding-horizon execution.

\subsection{World Model Baseline Implementation}\label{sec:nominal_planning_baselines}

\paragraph{Training and evaluation protocol.}
This section describes the implementation details for the world-model baselines and our planning pipeline. Across all tasks, we follow the training and evaluation workflow of the \texttt{stable-worldmodel} codebase whenever possible. We compare against PLDM, DINO-WM, and Le-WM. Unless otherwise specified, each baseline is trained for $10$ epochs following the Le-WM training protocol~\cite{lewm}, where this training length was reported to provide strong planning performance. When task-compatible pretrained or reusable checkpoints are available, we initialize from those checkpoints and fine-tune on our dataset. In particular, due to the size of the Push-T dataset and the cost of training from scratch, the Le-WM model for Push-T is fine-tuned from a pretrained Push-T object checkpoint. For the customized Rope task, no pretrained checkpoints are available, so all world models are trained from scratch using the same offline dataset.

\paragraph{Shared preprocessing and architecture.}
All baselines are trained from image observations resized to $224 \times 224$ with patch size $14$. We use a history length of $3$ and a one-step prediction horizon. During planning, the learned one-step latent dynamics are rolled out autoregressively to evaluate multi-step action sequences. The dataset is split into $90\%$ training and $10\%$ validation, and we apply gradient clipping with threshold $1.0$. Unless otherwise stated, models are trained with batch size $128$. We use frame skip $5$ for Reacher, Cube, and Push-T, and frame skip $1$ for Rope.

\paragraph{PLDM.}
For PLDM, we train with AdamW using learning rate $5\times 10^{-5}$ and weight decay $10^{-3}$. The encoder is a ViT-Tiny model with embedding dimension $192$. The predictor has depth $6$, $16$ attention heads, MLP dimension $2048$, head dimension $64$, and dropout $0.1$. Proprioceptive inputs are disabled. The active loss weights are $18$ for the standard-deviation loss, $0.7$ for the temporal standard-deviation loss, $12$ for the covariance loss, and $0.2$ for the temporal alignment loss. 

\paragraph{DINO-WM.}
For DINO-WM, we use a frozen DINOv2-Small visual backbone and train the latent predictor with AdamW using learning rate $5\times 10^{-4}$ and no weight decay. The predictor uses the same architecture as PLDM: depth $6$, $16$ attention heads, MLP dimension $2048$, head dimension $64$, and dropout $0.1$. Actions are encoded with dimension $10$. Reacher, Cube, and Rope are trained without proprioceptive encoding, while Push-T uses proprioceptive encoding.

\paragraph{Le-WM.}
For Le-WM, we train with AdamW using learning rate $5\times 10^{-5}$ and weight decay $10^{-3}$. The model uses a ViT-Tiny encoder with image size $224$, patch size $14$, history size $3$, prediction horizon $1$, and embedding dimension $192$. The latent predictor uses the same configuration as the other baselines: depth $6$, $16$ attention heads, MLP dimension $2048$, head dimension $64$, and dropout $0.1$. For Rope, Le-WM is trained from scratch with batch size $128$, frame skip $1$, and SigReg regularization with weight $0.09$, $17$ knots, and $1024$ random projections.

For Push-T, we fine-tune the pretrained Le-WM Push-T object checkpoint on the combined Push-T dataset. The Push-T model uses frame skip $5$, action dimension $2$, and effective action dimension $10$. We train for $10$ epochs with batch size $110$, bfloat16 mixed precision, gradient clipping at $1.0$, and the Le-WM latent prediction objective. During fine-tuning, we use SigReg regularization with weight $0.005$, $17$ knots, and $1024$ random projections, together with temporal straightening loss with weight $0.01$.

\paragraph{Planning evaluation.}
For nominal world-model planning baselines, PLDM, DINO-WM, and Le-WM use the standard CEM planner provided by the \texttt{stable-worldmodel} implementation. CEM optimizes action sequences in the learned latent space and executes the selected actions in a receding-horizon manner. For our method, we use the same learned latent dynamics as the planning model but replace CEM with our gradient-based latent planner. In the robust setting, this planner is augmented with the SLS-based tube propagation and conformalized latent constraints described in the main text. This setup ensures that baseline performance reflects the default planning pipeline associated with each learned world model, while our method differs only in the planning and robustness layer built on top of the learned latent dynamics.

\clearpage
\newpage 
\subsection{Implementing \method~(Problem \eqref{eq:cp_sls_mpc}) }\label{app:sls_squared_implement}
Below we discuss our implementation of Problem \eqref{eq:cp_sls_mpc}. For self-containment, we restate the problem: 
\begin{subequations}
\begin{align}
\min_{\bm{\zeta},\mathbf{v},\Phi^s,\Phi^u}\quad
    & J(\bm{\zeta},\mathbf{v}) + J_f(\zeta_T) + H(\Phi^s,\Phi^u)
    \tag{\ref{eq:sls_objective}}\\
\mathrm{s.t.}\quad
    & \zeta_{k+1}=f_\phi(\zeta_k,v_k),\quad \zeta_0=\bar s_0,
      \qquad k=0,\ldots,T-1,
    \tag{\ref{eq:sls_dyn}}\\
    & \Phi^s_{k+1,j}=A_k\Phi^s_{k,j}+B_k\Phi^u_{k,j},
      \qquad 0\le j<k,\ k=0,\ldots,T-1,
    \tag{\ref{eq:sls_slp}}\\
    & \Phi^s_{j+1,j}=E_j,
      \qquad j=0,\ldots,T-1,
    \tag{\ref{eq:sls_slp_init}}\\
    & g_i(\zeta_k,v_k)+b_i+
      \tau_{i,k}
      \le 0,
      \qquad i=1,\ldots,n_c,\ k=0,\ldots,T,
    \tag{\ref{eq:sls_tightenings}}\\
    & \mathcal{R}_k^s\subseteq\mathcal{I},
      \qquad k=0,\ldots,T.
    \tag{\ref{eq:sls_indist}}
\end{align}
\end{subequations}

In Section \ref{sec:problem_objective} we discuss the cost and reference trajectory generation used for the objective in \eqref{eq:sls_objective}. Next, in Section \ref{sec:calibrate_error} we discuss how the model error is calibrated. The calibrated error is used to initialize the system level constraints (SLC) in \eqref{eq:sls_slp}-\eqref{eq:sls_slp_init}. Details on the obstacles and how we train a latent classifier and conformalize it are available in Section \ref{sec:latent_classifier}. In the optimization problem, the classifier is linearized and tightened in \eqref{eq:sls_tightenings}. Lastly, we discuss the in-domain ellipsoid constraint, \eqref{eq:sls_indist}, and how it is formed in Section \ref{sec:conformalized_latent}.

\subsubsection{Defining the Objective}\label{sec:problem_objective}
To implement Problem \eqref{eq:cp_sls_mpc}, we follow \cite{fang2026safe, srinivasan2026safety} and use a real-time-iteration (RTI) scheme which solves a single SQP (instead of solving several successive SQPs until convergence) to enable faster solve times. To avoid ill-conditioned problems and improve numerical stability, we use the MPPI algorithm \cite{williams2017model} to warm-start the MPC solve at each time step. Below, we describe the MPPI warm-start objective, the nominal SLS objective, the associated cost weights, and the modifications used for the Constraint-iLQR baselines.

Let $\zeta_t \in \mathbb{R}^{n_s}$ denote the latent state and $\zeta_g$ the goal latent state. MPPI first generates a reference trajectory $\bar{\bm{\zeta}} := \{\bar\zeta_k\}_{k=0}^{T}$ and $\bar{\mathbf{v}}:=\{\bar{v}_k\}_{k=0}^{T-1}$ by minimizing the sampling objective
\begin{equation}
J_{\mathrm{MPPI}}(\bm{\zeta}, \mathbf{v}) =
\sum_{t=0}^{H^{\mathrm{mppi}}-1}
\left(
\|\zeta_t-\zeta_g\|_{Q^{\mathrm{mppi}}}^{2}
+
\|v_t\|_{R^{\mathrm{mppi}}}^{2}
+
J_{\mathrm{soft}}(\zeta_t)
\right)
+
\|\zeta_{H^{\mathrm{mppi}}}-\zeta_g\|_{Q^{\mathrm{mppi}}_{T}}^{2},
\label{eq:mppi_warmstart_cost}
\end{equation}
where $J_{\mathrm{soft}}$ denotes task-dependent soft penalties used during MPPI sampling. The MPPI horizon $H^\mathrm{mppi}$ can be greater than $T$ (the tracking horizon used by SLS) to encourage smaller actions at each step.  Furthermore, we define $\|x\|_{M}^2 := x^{\top}Mx$ for $x \in \mathbb{R}^n$ and $M \in \mathbb{R}^{n \times n}$.
The nominal RTI-SQP/SLS objective then tracks the MPPI reference trajectory $\bar{\bm{\zeta}}$ and $\bar{\mathbf{v}}$ over the MPC horizon $T$:
\begin{equation}
J_{\mathrm{SLS}}(\bm{\zeta}, \mathbf{v}) =
\sum_{t=0}^{T-1}
\left(
\|\zeta_t-\bar{\zeta}_t\|_{Q}^{2}
+
\|v_t-\bar{v}_t\|_{R}^{2}
\right)
+
\|\zeta_T-\bar{\zeta}_T\|_{Q_T}^{2}.
\label{eq:sls_nominal_cost}
\end{equation}

Our cost matrices penalizing state tracking $Q_{T}, Q, Q_{T}^\text{mppi}, Q^\text{mppi} \succeq 0$ and control penalties $R, R^\text{mppi}\succ0$ are summarized in Table~\ref{tab:mppi_sls_weights}. While interpretability is minimal in the latent space, we leverage the Markovian structure to apply different penalties to position states (corresponding to the zeroth-order state) and velocity states (corresponding to the first-order state). Compactly, we use $I_n \in \mathbb{R}^{n\times n}$ as the $n$-dimensional identity matrix, and for $A \in \mathbb{R}^{n\times n}, B\in \mathbb{R}^{m\times m}$ we denote $\mathrm{blkdiag}(A,B)$ as the block diagonal matrix.  

\begin{table}[t]
\centering
\small
\caption{MPPI warm-start and nominal SLS tracking weights used in our experiments.}
\begin{tabular}{llccc}
\toprule
Task & Objective & Horizon & State Weights & Control Weight \\
\midrule
\multirow{3}{*}{Rope}
& MPPI
& $15$
& $Q^{\mathrm{mppi}}=\mathrm{blkdiag}(0.005I_{12}, 1.0I_{12})$
& $R^{\mathrm{mppi}}=1.0I_{3}$ \\
&
&
& $Q^{\mathrm{mppi}}_{T}=\mathrm{blkdiag}(10I_{12}, 1.0I_{12})$
& \\
\cmidrule(lr){2-5}
& SLS
& $10$
& $Q=0.1I_{24},\quad Q_T=5.0I_{24}$
& $R=0.2I_{3}$ \\
\midrule
\multirow{4}{*}{Reacher}
& MPPI
& $60$
& $Q^{\mathrm{mppi}}=\mathrm{blkdiag}(0.005I_{5}, 1.0I_{5})$
& $R^{\mathrm{mppi}}=0.01I_{2}$ \\
&
&
& $Q^{\mathrm{mppi}}_{T}=\mathrm{blkdiag}(100I_5, 1.0I_{5})$
& \\
\cmidrule(lr){2-5}
& SLS
& $5$
& $Q=\mathrm{blkdiag}(0.005I_5, 1.0I_{5})$
& $R=0.1I_2$ \\
&
&
& $Q_T=\mathrm{blkdiag}(5.0I_5, 0.9I_{5})$
& \\
\midrule
\multirow{4}{*}{OGBench-Cube}
& MPPI
& $10$
& $Q^{\mathrm{mppi}}=\mathrm{blkdiag}(0.05I_{12}, 0.10I_{12})$
& $R^{\mathrm{mppi}}=1.0I_5$ \\
&
&
& $Q^{\mathrm{mppi}}_{T}=\mathrm{blkdiag}(15I_{12}, 1.0I_{12})$
& \\
\cmidrule(lr){2-5}
& SLS
& $10$
& $Q=\mathrm{blkdiag}(0.05I_{12}, 0.05I_{12})$
& $R=0.5I_5$ \\
&
&
& $Q_T=\mathrm{blkdiag}(1.0I_{12}, 1.0I_{12})$
& \\
\bottomrule
\end{tabular}
\label{tab:mppi_sls_weights}
\end{table}

In \eqref{eq:mppi_warmstart_cost}, we also add a soft penalty, $J_\textrm{soft}$, defined as
\begin{equation}
J_{\mathrm{soft}}(\zeta)
=\lambda_{\mathrm{ID}}
\underbrace{\left[
\max\left\{
\rho_{\mathrm{ID}}(\zeta)-1,
0
\right\}
\right]^2}_{\text{Latent Ellipsoid Penalty}} + \lambda_{\mathrm{obs}}
\underbrace{\left[
\operatorname{softplus}
\bigl(
\eta  - c_\psi(\Pi_z \zeta)
\bigr)
\right]^2}_{\text{Obstacle Penalty}}
\label{eq:mppi_soft_latent_region}
\end{equation}
where $\rho_{\mathrm{ID}}(\zeta)$ is the conformal latent ellipsoid defined in \eqref{eq:in_domain} and $c_\psi(\Pi_z \zeta)$ is the obstacle classifier score evaluated on the zeroth-order latent component. 
$\lambda_\textrm{obs}$ and $\lambda_\mathrm{ID}$ are hyperparameters controlling the strength of the respective penalties.
The soft penalty in \eqref{eq:mppi_soft_latent_region} encourages the reference trajectory to satisfy the constraints. Notably, these references are not guaranteed to robustly satisfy the constraints, and thus SLS is still required to enforce robust constraint satisfaction. The remaining MPPI hyperparameters alongside the values of $\lambda_\textrm{obs}$ and $\lambda_\mathrm{ID}$ used for each task are summarized in Table~\ref{tab:mppi_hyperparameters}.

\begin{table}[t]
\centering
\small
\caption{MPPI hyperparameters and soft-penalty weights used for warm-start trajectory generation.}
\begin{tabular}{lccccccc}
\toprule
Task &
Samples &
Updates &
Reward Weight &
Noise Scale &
$\beta$ &
$\lambda_{\mathrm{ID}}$ &
$\lambda_{\mathrm{obs}}$ \\
\midrule
Rope &
2048 &
6 &
25 &
0.20 &
0.65 &
20 &
0 \\

Reacher &
512 &
5 &
20 &
0.15 &
0.70 &
0 &
1000 \\

OGBench-Cube &
512 &
5 &
30 &
0.20 &
0.60 &
10 &
100 \\
\bottomrule
\end{tabular}
\label{tab:mppi_hyperparameters}
\end{table}

\subsubsection{Calibrated Error Bounds} \label{sec:calibrate_error}
To calibrate the error bounds we split our dataset $\mathcal{D}_\textrm{cal}$ in half (i.e., $\mathcal{D}_\textrm{cal}^{(1)}$ and $\mathcal{D}_\textrm{cal}^{(2)}$ such that $|\mathcal{D}_\textrm{cal}^{(1)}| = |\mathcal{D}_\textrm{cal}^{(2)}|$. 
For our experiments, $\mathcal{D}_\textrm{cal}$ is constructed using the same expert rollouts used to train the models, as the transitions encountered in $\mathcal{D}_\textrm{cal}$ are distributionally similar to the transitions encountered during planning time, preserving the exchangeability assumption. For Reacher, Rope, and OGBench-Cube, $|\mathcal{D}_\textrm{cal}|=200,000$, $|\mathcal{D}_\textrm{cal}|=80,000$, and $|\mathcal{D}_\textrm{cal}|=98,800$ respectively.
The first partition, $\mathcal{D}_\textrm{cal}^{(1)}$, is used to learn the error covariance, $\Sigma_{\psi}(s,u)$, structure by minimizing the MGNLL loss \eqref{eq:MGNLL}, while the second half is used to perform the conformal calibration described in Section \ref{sec:cp}. For the Reacher task, we trained $\Sigma_{\psi}(s,u)$ as a neural network with 2 hidden layers (128 neurons per hidden layer) at a learning rate of 0.0003 using a cosine annealing learning rate scheduler to stabilize training. The model directly outputs the Cholesky factor to avoid computing it in real time. For the Rope and OGBench-Cube tasks, we defined $\Sigma_{\psi}(s,u):= \Sigma_\textrm{cov}$ as a fixed parameter. In this case, the loss is minimized by computing $\Sigma_\textrm{cov} = \frac{1}{|\mathcal{D}_\textrm{cal}^{(1)}|}\sum_{(s_i,u_i,s_i^+)\in\mathcal{D}_\textrm{cal}^{(1)}}(s_i^+ -f_\phi(s_i,u_i))(s_i^+ -f_\phi(s_i,u_i))^{\top}$ as the sample error covariance. For the conformal calibration we set $\alpha=0.1, 0.4,~\text{and}~0.1$ for Reacher, Rope, and OGBench-Cube respectively.

\subsubsection{Training and Conformalizing the Constraint Classifier}\label{sec:latent_classifier}
For each constrained task, we train a latent-space classifier from a balanced dataset of rendered
constraint-violating and non-violating observations.  Let
\[
\mathcal{D}_{\mathrm{obs}}=\{(o_i,y_i)\}_{i=1}^{N},\qquad y_i\in\{0,1\},
\]
where, following the notation in the main text, $y_i=0$ denotes a constraint-violating observation
and $y_i=1$ denotes a non-violating observation.  The task-specific geometric rule used to assign
$y_i$ is described below.  In the implementation, the stored source label is the complementary
indicator $\tilde b_i=1-y_i$, so $\tilde b_i=1$ marks the violating class used for conformal calibration.  Each
image is encoded with the frozen world-model encoder
$z_i=\mathrm{enc}_{\theta}(o_i)\in\mathbb{R}^{n_z}$, and the latent features are standardized using
the training-split mean and standard deviation.  Let
$\mathcal{I}_{\mathrm{tr}}$, $\mathcal{I}_{\mathrm{val}}$, and $\mathcal{I}_{\mathrm{cal}}$ denote
the index sets for the training, validation, and conformal calibration splits of
$\mathcal{D}_{\mathrm{obs}}$, respectively.  We then train a small MLP score model
$c_{\psi}:\mathbb{R}^{n_z}\to\mathbb{R}$ with the convention that larger scores indicate safer
states.  Equivalently, the signed labels are
\[
\tilde y_i =
\begin{cases}
-1, & y_i=0,\\
+1, & y_i=1,
\end{cases}
\]
and the classifier is optimized with the hinge loss
\[
\mathcal{L}_{\mathrm{obs}}(\psi)
=
\frac{1}{N_{\mathrm{tr}}}
\sum_{i\in\mathcal{I}_{\mathrm{tr}}}
\max\bigl(0,m-\tilde y_i c_{\psi}(z_i)\bigr).
\]
Thus, before conformal calibration, $c_{\psi}(z)>0$ is classified as non-violating and
$c_{\psi}(z)\le 0$ is classified as violating.

We calibrate the classifier using only held-out violating examples.  Let
$\mathcal{I}_{\mathrm{cal}}^{-}=\{i\in\mathcal{I}_{\mathrm{cal}}:y_i=0\}$ and define the
nonconformity score
\[
r_i=\max\{0,c_{\psi}(z_i)\},\qquad i\in\mathcal{I}_{\mathrm{cal}}^{-}.
\]
This score is zero for a violating calibration point that is already on the violating side of the
nominal classifier and positive only when the point is incorrectly placed on the safe side.  For
miscoverage level $\delta$, we choose
\[
\eta =
\operatorname{Quantile}_{\lceil (n_{\mathrm{cal}}^{-}+1)(1-\delta)\rceil}
\left(\{r_i:i\in\mathcal{I}_{\mathrm{cal}}^{-}\}\cup\{\infty\}\right),
\]
where $n_{\mathrm{cal}}^{-}=|\mathcal{I}_{\mathrm{cal}}^{-}|$.  The conformalized violating set is
\[
\mathcal{O}_{\mathrm{conf}}
=
\{z: c_{\psi}(z)< \eta\}.
\]
Under the exchangeability assumption for future violating observations and the calibration split,
this construction covers a future violating latent with probability at least $1-\delta$.

To match the SLS notation that is used in the main text, we convert the classifier score into a constraint
function of the following form:
\[
g_{\psi}(s)=-c_{\psi}(z),
\qquad z=\Pi_z s,
\qquad
\mathcal{S}_{\mathrm{safe}}
=
\{s:g_{\psi}(s)\le -\eta\}
=
\{s:-c_{\psi}(\Pi_z s)\le -\eta\}.
\]
Here $\Pi_z$ extracts the zeroth-order latent embedding from the Markov latent state
$s=[z,\Delta z,\ldots,\Delta^K z]$.  This is the sign conversion used by the SLS planner: the raw
network score $c_{\psi}$ is larger for safe states, while the MPC constraint is written in the paper's
standard form $g_{\psi}(s)\le 0$.  The same threshold also appears in the MPPI warm-start soft
penalty as a smooth relaxation of $\eta-c_{\psi}(z)\le 0$. 

\paragraph{Translating Constraints for SLS:}
Using the obstacle constraint we derived, it can be plugged into SLS and robustly satisfied by tightening the constraint. For nominal constraint satisfaction at step $k$, in the trajectory ($0\le k\le T$) we require, 
$\bar{g}_\psi(\zeta_k, v_k) \le -\eta$, where we define $\bar{g}_\psi(\zeta_k, v_k) = g(z_k)$ such that $z_k$ is the zeroth-order component of the Markov state $\zeta_k$.  We then apply the constraint tightening as a function of the linearized constraint and tube (as discussed in Appendix \ref{app:sls}) to robustly satisfy the constraint for all Markov states, $s_k \in \mathcal{R}^s_k$ and control inputs  $u_k \in \mathcal{R}^u_k$ in the tube. In our implementation we also have box constraints for the control input (i.e., $u_k \le u_\textrm{max} \texttt{ and } -u_k \le -u_\textrm{min},\forall ~0 \le k \le T-1$). Accounting for both upper and lower bound control constraints, this results in a total of $2n_u + 1 =n_c$ constraints in our problem. For succinctness, we collect all the constraints for each time step as follows:
\begin{equation}
    g_i(\zeta_k,v_k)+b_i+
      \tau_{i,k}
      \le 0,
      \qquad i=1,\ldots,n_c,\ k=0,\ldots,T,
\end{equation}
where $g_i(\zeta_k,v_k)$ is the constraint function, $b_i$ is the nominal bound (adjusted by $\eta$ for conformalized obstacle constraint indices), and $\tau_{i,k}$ is the constraint tightening.

\paragraph{Task-specific label rules and classifier sizes} \mbox{}

\textbf{Reacher:}
For Reacher, the violating set is the joint-space box
\[
\mathcal{Q}_{\mathrm{obs}}
=
\{q=(q_1,q_2):0\le q_1\le 3.1415,\ -2.88\le q_2\le -2.45\}.
\]
This region corresponds to configurations where the second link folds back very close to the first
link.  In the simulator this is a pose/joint-space obstacle constraint, while in planning it is enforced
as a learned latent-space constraint from rendered images.  We sample configurations inside
$\mathcal{Q}_{\mathrm{obs}}$ as violating examples and configurations outside the box as
non-violating examples, then render each configuration from the fixed planner camera to obtain the
balanced classifier dataset.  The Reacher classifier uses a $5$-dimensional latent input and an MLP
with one hidden layer of width $6$ followed by a scalar output, using GELU nonlinearities.

\textbf{OGBench:}
For Cube, the classifier enforces a height constraint on the grasped cube/effector state.  We
synthesize grasped cube configurations, render front-camera observations with the target hidden,
and label an observation as violating when the measured gripper height is above the configured
threshold
\[
h_{\mathrm{grip}}>h_{\max},\qquad h_{\max}=0.09.
\]
Samples with $h_{\mathrm{grip}}\le 0.09$ are treated as non-violating.  The sampled cube positions
use the task-space bounds $x\in[0.30,0.50]$, $y\in[-0.25,0.25]$, and $z\in[0.02,0.30]$.  If
precomputed dataset splits are unavailable, the script creates stratified source-training and
calibration splits before carving out a validation subset.  The Cube height classifier uses a
$12$-dimensional latent input and a linear score model
$c_{\psi}(z)=w_{\psi}^{\top}z+b_{\psi}$, i.e., a single learned separating vector and bias.

\textbf{Rope:}
For Rope, each sampled task state is
$x=(r,h,w)$, where $r$ is the endpoint reach coordinate, $h$ is the endpoint height, and $w$ is the
rope width.  The task-space bounds are
\[
r\in[-0.10,0.30],\qquad h\in[1.16,1.35],\qquad w\in[0.20,0.75].
\]
The obstacle is a speedbump-shaped barrier in the $(r,h)$ task-space slice, placed on the tabletop
over the active reach interval $[0.05,0.15]$.  Formally, it is the upper half of an ellipse with
center reach $r_c=0.10$, half-width $a=0.05$, base height $z_{\mathrm{base}}=0.75$, and peak
height $z_{\mathrm{peak}}=0.91$:
\[
b(r)=z_{\mathrm{base}}+
(z_{\mathrm{peak}}-z_{\mathrm{base}})
\sqrt{\max\left\{0,1-\left(\frac{r-r_c}{a}\right)^2\right\}}.
\]
Thus the bump rises $0.16$ above the table at its midpoint and returns to the table height at
$r=0.05$ and $r=0.15$.  Because the rope sags as a function of width, we estimate a sag profile
$s(w)$ from simulated proxy-rope configurations and use
\[
\ell(x)=h-s(w)
\]
as the approximate lowest rope height.  A rope state is labeled violating when
\[
r\in[0.05,0.15]\quad\text{and}\quad \ell(x)\le b(r),
\]
and non-violating otherwise.  This gives a formal task-space obstacle that visually behaves like a
smooth speedbump the rope must clear; the classifier then learns the corresponding latent-space
constraint from fixed-camera renderings.  The Rope classifier uses a $12$-dimensional latent input
and an MLP with one hidden layer of width $12$ followed by a scalar output, using GELU
nonlinearities.

\subsubsection{Calibrating the Latent Support}\label{sec:conformalized_latent}
To further validate the exchangeability assumption at planning time, we estimate and conformalize the support of our calibration dataset. SIGReg loss \eqref{eq:sigreg} structures the latent space such that the raw latent vectors follow an isotropic multivariate Gaussian distribution with unit variance. Thus, the Markovian latent vector also inherits the Gaussian structure, but is not guaranteed to be isotropic due to correlation between successive latent states. Hence, we learn both the shape and size of the ellipsoidal geometry characterizing the support of our calibration dataset, and constrain our planner to remain within it.
We refer to this constraint as the in-distribution (ID) latent ellipsoid. 
To obtain this constraint we use the same calibration dataset, $\mathcal{D}_\textrm{cal}$ and partitions, $\mathcal{D}_\textrm{cal}^{(1)}/\mathcal{D}_\textrm{cal}^{(2)}$from Section \ref{sec:calibrate_error}. We compute the sample covariance of the states, $\Sigma_{ID}$, using states, $s_i$, in $\mathcal{D}_\textrm{cal}^{(1)}$, and fix the mean $\mu = 0$. We then compute the Mahalanobis distance $d_i = s_i^{\top}\Sigma_\textrm{ID}^{-1}s_i$ for each state, $s_i \in \mathcal{D}_\textrm{cal}^{(2)}$. Lastly, we compute the $(1-\alpha_\textrm{ID})$-quantile $q_\textrm{ID}$ of distances $d_i$ to form the boundary of our support, as defined in \eqref{eq:in_domain}. Formally, this provides a boundary between in-distribution points which we assume to be $(1-\alpha_\textrm{ID} )\cdot100\%$ and outlier points corresponding to $\alpha_\textrm{ID}\cdot100\%$ of points in the calibration dataset, $\mathcal{D}_\textrm{cal}^{(2)}$. For each of our experiments we set $\alpha_\textrm{ID}=0.1$.

\subsection{Safety Baseline Implementation}\label{app:safety_baselines}

\paragraph{HJ-filtered safety baseline.}
We implement the HJ-filtered baseline as a latent safety shield applied on top of a nominal planner. We use the same encoder and latent dynamics notation as in the main method: an observation $o_t$ is encoded as
\[
    z_t=\mathrm{enc}_\theta(o_t),
\]
and the planner state is the Markov latent state
\[
    s_t^z =
    [z_t,\Delta z_t,\ldots,\Delta^K z_t].
\]
Actions are represented in the normalized action coordinates used by the learned world model:
\[
    \bar u =
    \mathrm{clip}\left(
    \frac{u-\mu_u}{\sigma_u},
    u_{\min},
    u_{\max}
    \right),
    \qquad
    u_{\min}=-2,\quad u_{\max}=2 .
\]
The HJ baseline rolls out the learned one-step latent dynamics
\[
    \hat s_{t+1}^z = f_\phi(s_t^z,\bar u_t).
\]

The HJ baseline uses its own obstacle classifier convention, inherited from the HJ-filtered baseline implementation, which is distinct from the constraint classifier notation used by our SLS planner in the main body. To avoid overloading $g_\psi$, we denote the HJ classifier score by $c_{\omega}$. Its signed safety margin is
\[
    \ell_{\mathrm{HJ}}(s)
    =
    c_{\omega}(s)-\tau_{\mathrm{HJ}},
\]
where $\tau_{\mathrm{HJ}}$ is the HJ classifier decision threshold, calibrated separately from the SLS constraint margin $\eta$. In our experiments, $\tau_{\mathrm{HJ}}$ is selected using the same held-out calibration split used for conformal safety calibration. Under this baseline convention, positive values indicate states classified as safe and non-positive values indicate states classified as unsafe. Depending on the experiment configuration, the raw margin is optionally transformed by a monotone squashing function,
\[
    \ell_{\mathrm{HJ}}(s)
    \leftarrow
    \tanh(\ell_{\mathrm{HJ}}(s))
    \quad \text{or} \quad
    \ell_{\mathrm{HJ}}(s)
    \leftarrow
    \tanh(2\ell_{\mathrm{HJ}}(s)).
\]

We train a PyHJ/Avoid-DDPG recovery policy $\pi_{\mathrm{HJ}}$ and critic $V_{\mathrm{HJ}}$ in the learned latent dynamics. Episodes are initialized from cached latent states. For a sampled action $\bar u_t$, the imagined next latent state is
\[
    \tilde s_{t+1}^z=f_\phi(s_t^z,\bar u_t),
\]
and the transition signal is the next-state safety margin:
\[
    r_t=\ell_{\mathrm{HJ}}(\tilde s_{t+1}^z).
\]
An episode terminates when $\ell_{\mathrm{HJ}}(\tilde s_{t+1}^z)\le0$. The learned critic therefore estimates recoverability under the latent recovery policy. At deployment, the HJ barrier score is
\[
    B_{\mathrm{HJ}}(s)
    =
    \min\{\ell_{\mathrm{HJ}}(s),V_{\mathrm{HJ}}(s)\}.
\]

Given a nominal raw action $u_t^{\mathrm{nom}}$ from the base planner, the filter first normalizes it to $\bar u_t^{\mathrm{nom}}$ and predicts
\[
    \hat s_{t+1}^{z,\mathrm{nom}}
    =
    f_\phi(s_t^z,\bar u_t^{\mathrm{nom}}).
\]
The nominal action is executed only if the current state is classified as safe and the predicted next state is accepted by the learned barrier. Otherwise, the filter replaces it with the HJ recovery action:
\[
    \bar u_t =
    \begin{cases}
    \pi_{\mathrm{HJ}}(s_t^z), &
    \text{if } \ell_{\mathrm{HJ}}(s_t^z)\le0
    \text{ or }
    B_{\mathrm{HJ}}(\hat s_{t+1}^{z,\mathrm{nom}})\le\epsilon,\\
    \bar u_t^{\mathrm{nom}}, &
    \text{otherwise}.
    \end{cases}
\]
We use $\epsilon=0$ unless otherwise specified. The selected normalized action is then unnormalized before execution:
\[
    u_t=\bar u_t\sigma_u+\mu_u .
\]

\paragraph{Latent Policy Barrier baseline.}
We also evaluate an adapted Latent Policy Barrier (LPB) baseline, which incorporates a nearest-neighbor in-distribution barrier as a regularizer in the nominal iLQR planner. First, we build a bank of safe Markov latent states from the training dataset. Each image is encoded as $z_t=\mathrm{enc}_\theta(o_t)$ and converted into the same Markov latent state $s_t^z$ used by the planner. Unsafe states are removed using the analytical ground-truth obstacle check used to label obstacle data. Thus, the prototype bank contains only latent states whose corresponding environment configurations are collision-free under the geometry-based safety test.

The remaining safe states are whitened,
\[
    \tilde s =
    \frac{s-\mu_{\mathrm{LPB}}}{\sigma_{\mathrm{LPB}}},
\]
and $M$ prototype states $\mathcal P=\{p_j\}_{j=1}^M$ are sampled from the whitened safe set. We calibrate a distance threshold $\rho$ as a chosen quantile of nearest-prototype distances on held-out non-prototype safe states. At planning time, each predicted latent state is scored by
\[
    d_{\mathrm{LPB}}(s)
    =
    \min_{p_j\in\mathcal P}
    \left\|
    \frac{s-\mu_{\mathrm{LPB}}}{\sigma_{\mathrm{LPB}}}
    -
    p_j
    \right\|_2 .
\]
The LPB penalty is zero inside the calibrated safe tube and grows quadratically outside it:
\[
    c_{\mathrm{LPB}}(s)
    =
    \left[
    d_{\mathrm{LPB}}(s)-\alpha_\textrm{LPB}\rho
    \right]_+^2,
\]
where $\alpha_\textrm{LPB}$ is the threshold scale and $[\cdot]_+=\max(\cdot,0)$.

Unlike the HJ filter, LPB does not override actions after planning. Instead, it augments the iLQR trajectory objective. For a nominal latent trajectory $\bm{\zeta}=\{\zeta_k\}_{k=0}^{T}$ and normalized controls $\mathbf{v}=\{v_k\}_{k=0}^{T-1}$, the LPB-regularized objective is
\[
    J_{\mathrm{LPB}}
    =
    \sum_{k=0}^{T-1}
    \left(
    q_s\|\zeta_k-\zeta_g\|_2^2
    +
    q_u\|v_k\|_2^2
    \right)
    +
    q_f\|\zeta_T-\zeta_g\|_2^2
    +
    \lambda_{\mathrm{LPB}}
    \frac{1}{T}
    \sum_{k=0}^{T-1}
    c_{\mathrm{LPB}}(\zeta_k),
\]
where $\zeta_g$ is the goal Markov latent state. In our implementation, the LPB term is differentiable through the latent trajectory, and its gradient and Hessian are included in the iLQR backward pass. Thus, LPB biases the planner toward trajectories that remain close to the empirical distribution of safe latent states while still optimizing the original goal-reaching cost.

\paragraph{Constrained iLQR.} For our constrained iLQR implementation we solve Problem \eqref{eq:cp_sls_mpc} without constraints \eqref{eq:sls_slp}-\eqref{eq:sls_slp_init}, and remove the constraint tightening, $\tau_{i,k}$ in \eqref{eq:sls_tightenings}. This results in a constrained safety-aware problem, but does not require a trajectory to robustly satisfy constraints--unlike SLS. Furthermore we have two variants, one with the in-domain latent ellipsoid constraint and one without. Outside of these changes, both of these approaches still use the same parameters and configurations as the SLS implementation.

\subsection{Hardware Implementation}\label{app:hardware_implementation}
Our hardware experiment is the physical equivalent of our simulated bimanual rope task. 
\paragraph{Hardware:} The bimanual setup consists of two KUKA iiwa arms placed 1.5 meters apart, each with a Schunk end-effector. The rope is attached to the arms via two 3D-printed clips that are screwed into the Schunk end-effector. Figures \ref{fig:rope_real_nominal} and \ref{fig:rope_real_2_nominal} provide visualizations of the hardware setup. To capture images, we used an Intel RealSense D415 camera (and use only the RGB images). 

\paragraph{Data Collection:}
We parameterize the rope manipulation task with a three-dimensional task space: reach, the shared forward/backward displacement of the end effectors; height, their shared vertical position; and width, their lateral separation. Each task-space command is converted to bimanual joint targets using Drake \cite{drake} inverse kinematics, with the two grippers moved symmetrically about the rope centerline. Data is collected in shards. At the start of each shard, the arms return to a fixed home pose, which defines the nominal task origin. We then sample 25 random task-space knots within the workspace bounds, enforcing a minimum separation between consecutive knots. We linearly interpolate between each pair to produce a dense sequence of intermediate task targets. This waypoint-based trajectory gives broad workspace coverage while keeping each low-level motion small and feasible for hardware execution. We considered smoother spline trajectories, as used in the sibling simulation experiment, but use piecewise-linear waypoint segments on the real system because they make the commanded task deltas explicit, easier to validate for collision and speed limits, and easier to execute with settling between samples. For each interpolated target, the corresponding joint command is executed, the arms are allowed to settle, and an RGB image is captured. We record the planned task-space delta together with the measured end-effector state and executed task-space delta, producing image-action pairs for training the dynamics model.

\paragraph{Model Training:} 
We collect our data with a data structure similar to that of all our data collection scripts for simulation. This enables us to swap in our hardware data directly with our simulation data. All the model parameters are equivalent to the corresponding simulation experiment for nominal planning.   

\paragraph{Planning:} For nominal planning, we used MPPI to warm-start trajectory generation at each step and then iLQR to track it, similar to the Push-T approach described in Section \ref{sec:nominal_planning_setup}. Since the low-level controller requires a minimal actuation in order to move, we define
\begin{equation}
J_{\mathrm{act}}(\mathbf{v}) = \lambda_{\mathrm{act}} \sum_{t=0}^{T-1} \left[ \max\left( 0,\, \frac{\delta_{\min} - \|v_t\|_\infty}{\delta_{\min}} \right) \right]^2, 
\end{equation}
where $\delta_{\min}$ is the minimum actuation. Since the bimanual rope manipulation task has 3 actions controlling the distance between the two end-effectors (width), the height of the two end-effectors, and the depth of two end-effectors (reach), we only require one of the actions to meet the minimum actuation requirement. Thus, we compute the infinity norm of the nominal action $\|v_t\|_\infty$ to get the maximum value among the 3 actions and penalize it if it is below the minimum actuation threshold, $\delta_{\min}$. Thus, the cost used for the real-world experiment $J^\textrm{real}_\textrm{mppi}(\bm{\zeta}, \mathbf{v})$ is, 
\begin{equation}
    J^\textrm{real}_\textrm{MPPI}(\bm{\zeta}, \mathbf{v}) = \underbrace{J_\textrm{MPPI}(\bm{\zeta}, \mathbf{v})}_\textrm{Eq. \eqref{eq:nom_mppi_cost}} + J_{\mathrm{act}}(\mathbf{v}).
\end{equation}
Similar to the simulation experiment (for Push-T), we use iLQR to track the MPPI-generated reference trajectory, and provide the next control input to apply at each MPC step. The MPPI parameters and iLQR parameters are available in Table \ref{tab:real_world_planning_params}.

\begin{table}[h]
\centering
\small
\caption{MPPI and iLQR planning parameters for the real-world rope manipulation task.}
\begin{tabular}{llclc}
\toprule
Module & Parameter & Value & Parameter & Value \\
\midrule
\multirow{5}{*}{MPPI Warm-start} & Horizon ($H^{\mathrm{mppi}}$) & $60$ & Control Weight ($R^{\mathrm{mppi}}$) & $0.02I_{3}$ \\
& Samples & $512$ & Reward Weight & $25.0$ \\
& Updates & $6$ & Noise Scale & $0.2$ \\
& State Stage Weight ($Q^{\mathrm{mppi}}$) & $1.5I_{24}$ & Filter $\beta$ & $0.65$ \\
& State Terminal Weight ($Q_T^{\mathrm{mppi}}$) & $1000.0I_{24}$ & & \\
\midrule
Actuation Penalty & Minimum Threshold ($\delta_{\min}$) & $0.006$ & Penalty Weight ($\lambda_{\mathrm{act}}$) & $10.0$ \\
\midrule
\multirow{2}{*}{iLQR Tracking} & Horizon ($T$) & $10$ & State Terminal Weight ($Q_T$) & $50.0I_{24}$ \\
& State Stage Weight ($Q$) & $0.5I_{24}$ & Control Weight ($R$) & $0.02I_3$ \\
\bottomrule
\end{tabular}
\label{tab:real_world_planning_params}
\end{table}

\clearpage
\newpage 
\section{Extended Results}\label{app:extend_results}

In this section, we provide additional experimental results beyond those reported in the main text. Figures \ref{fig:pusht_nominal} - \ref{fig:rope_nominal} contain our nominal planning time-lapses where we visualize our goal-reaching performance. Figures \ref{fig:rope_real_nominal} - \ref{fig:rope_real_2_nominal} contain our nominal planning time-lapses for our bimanual rope manipulation hardware experiment and figure \ref{fig:rope_real_task_error} validates convergence of our planner to the goal configuration in the task space for real hardware. Figures \ref{fig:rope_safe_unsafe} - \ref{fig:cube_safe_unsafe} contain time-lapse comparisons of safe vs. unsafe plans demonstrating \method's ability to enforce safety. Figures \ref{fig:rope_tubes} - \ref{fig:cube_tubes} visualize the reachable tubes in the latent space. Lastly, we include the LPB safety baseline on Reacher, the LPB safety baseline on Cube, and the Le-WM nominal planning baseline on Push-T. 

\subsection{Nominal Trajectory Time-lapses}
\begin{figure}[H]
    \centering\vspace{-8pt}
    \includegraphics[width=1.0\linewidth, trim={0.0cm 0.0cm 0.0cm 0.0cm}, clip]{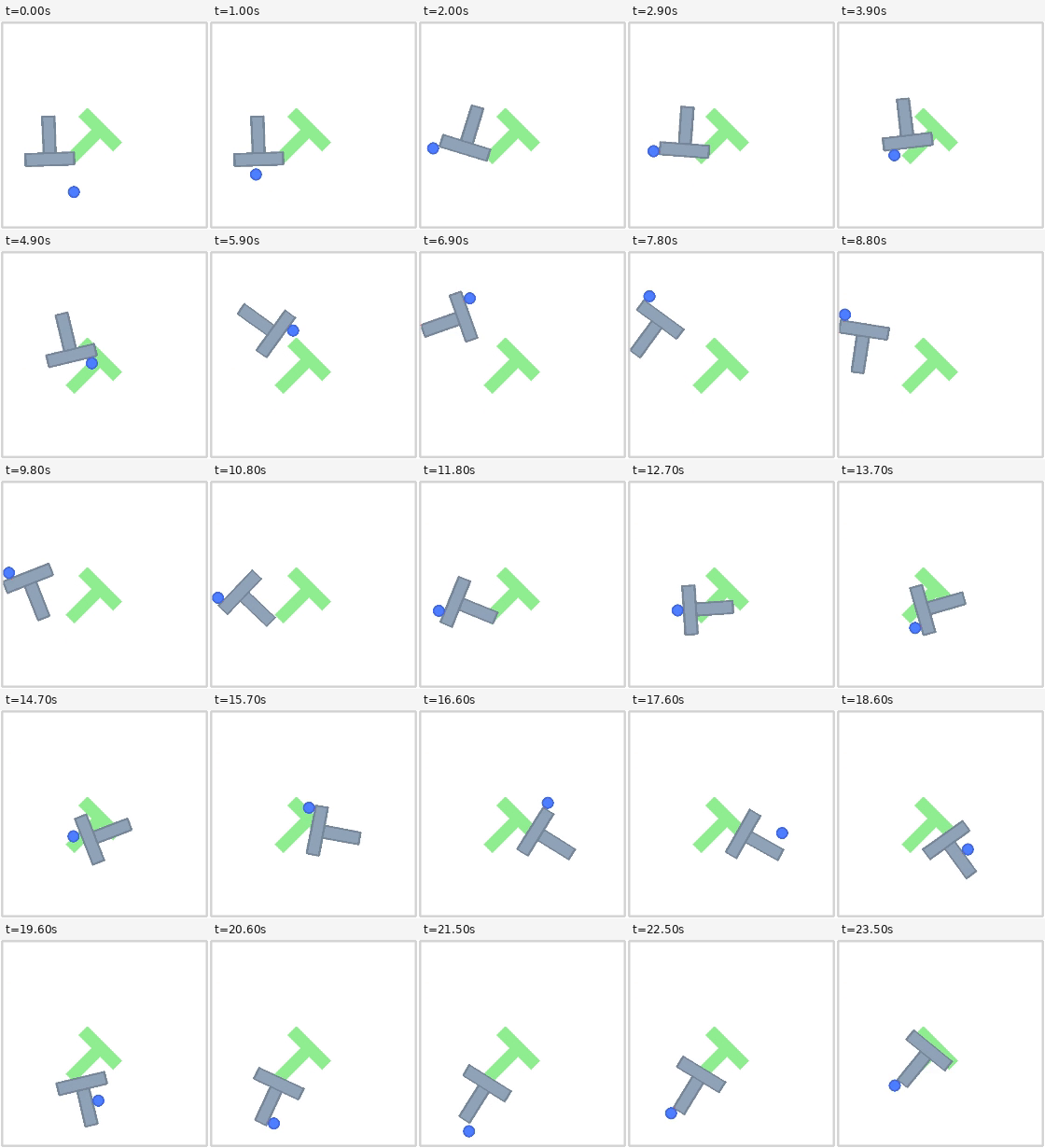}
    \vspace{-0.5em}
    \caption{\textbf{Push-T.} iLQR-based long-horizon planning, with MPPI warm-start. Our planner is able to successfully make and break contact with the T while executing complex maneuvers in order to push the T to the goal.}
    \label{fig:pusht_nominal}
\end{figure}
\clearpage

\begin{figure}[H]
    \centering\vspace{-8pt}
    \includegraphics[width=\linewidth, trim={0.0cm 0.0cm 0.0cm 0.0cm}, clip]{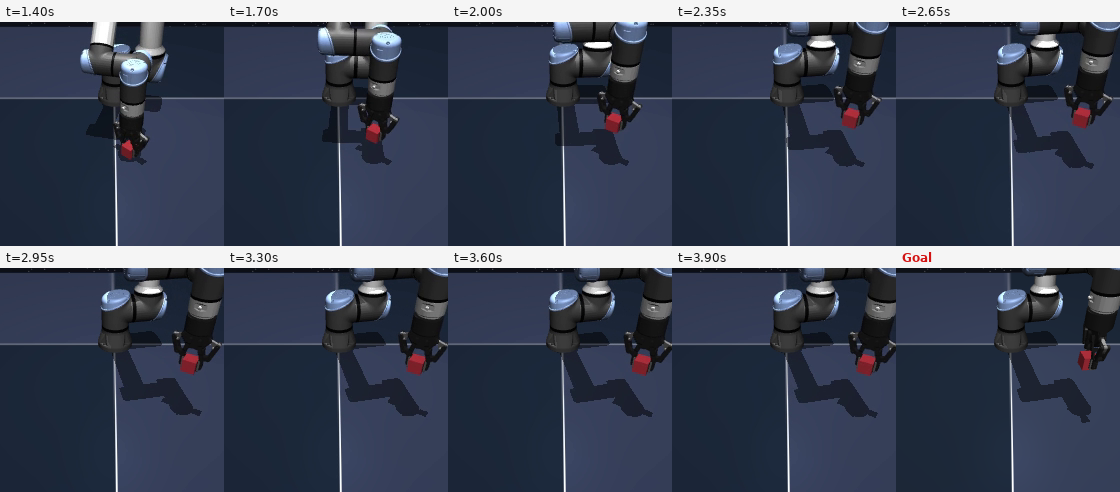}
    \vspace{-1.5em}
    \caption{\textbf{Cube.} iLQR-based long-horizon planning. The planner successfully moves the cube to the goal pose without dropping it, unlike the baseline approaches.}
    \label{fig:cube_nominal}
\end{figure}

\begin{figure}[H]
    \centering\vspace{-8pt}
    \includegraphics[width=\linewidth, trim={0.0cm 0.0cm 0.0cm 0.0cm}, clip]{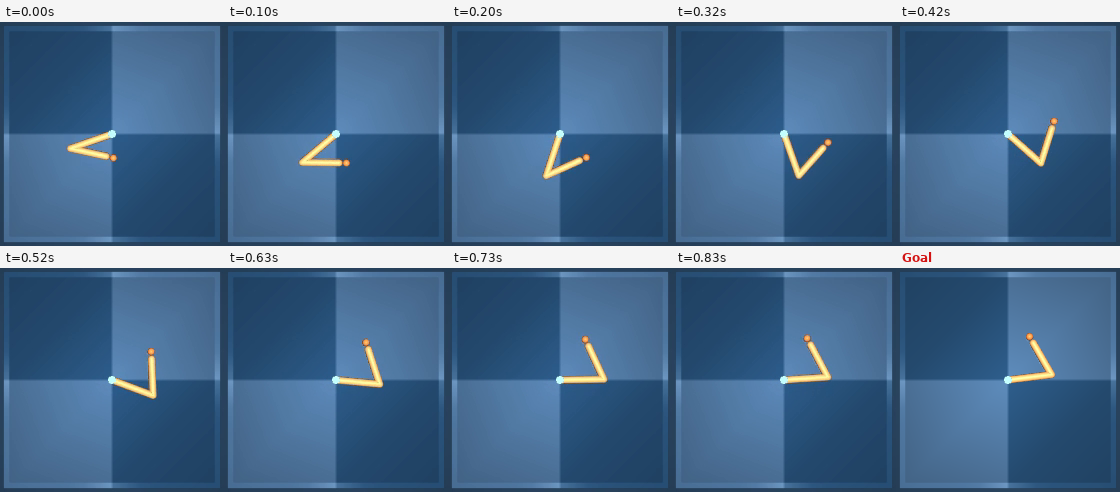}
    \vspace{-1.5em}
    \caption{\textbf{Reacher.} iLQR-based long-horizon planning. Similar to Figure \ref{fig:cube_nominal}, our gradient-based planner is able to successfully move to the goal in an efficient manner.}
    \label{fig:reacher_nominal}
\end{figure}

\begin{figure}[H]
    \centering\vspace{-8pt}
    \includegraphics[width=\linewidth, trim={0.0cm 0.0cm 0.0cm 0.0cm}, clip]{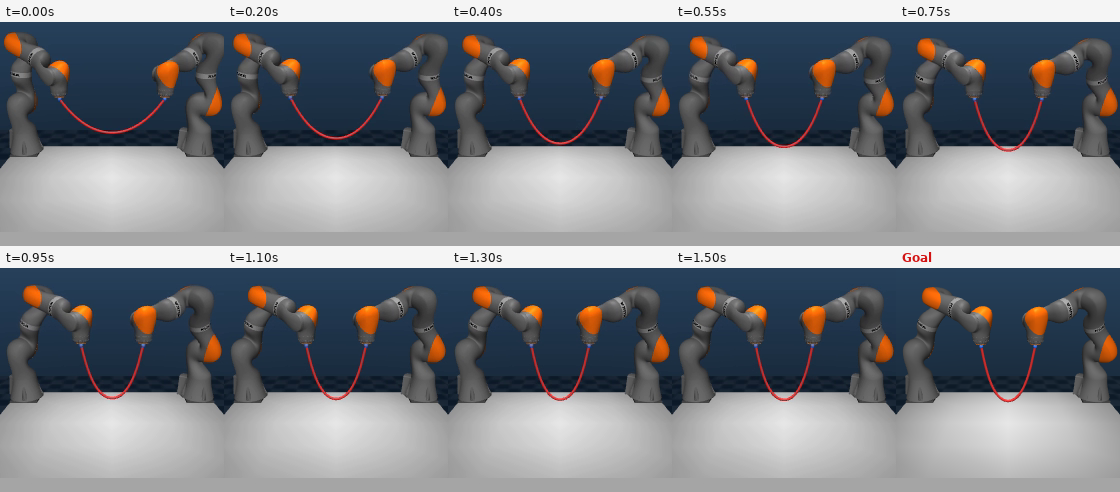}
    \vspace{-1.5em}
    \caption{\textbf{Rope.} iLQR-based long-horizon planning. The planner is able to correctly manipulate the deformable rope to reach the goal configuration.}
    \label{fig:rope_nominal}
\end{figure}

\begin{figure}[H]
    \centering\vspace{-8pt}
    \includegraphics[width=\linewidth, trim={0.0cm 0.0cm 0.0cm 0.0cm}, clip]{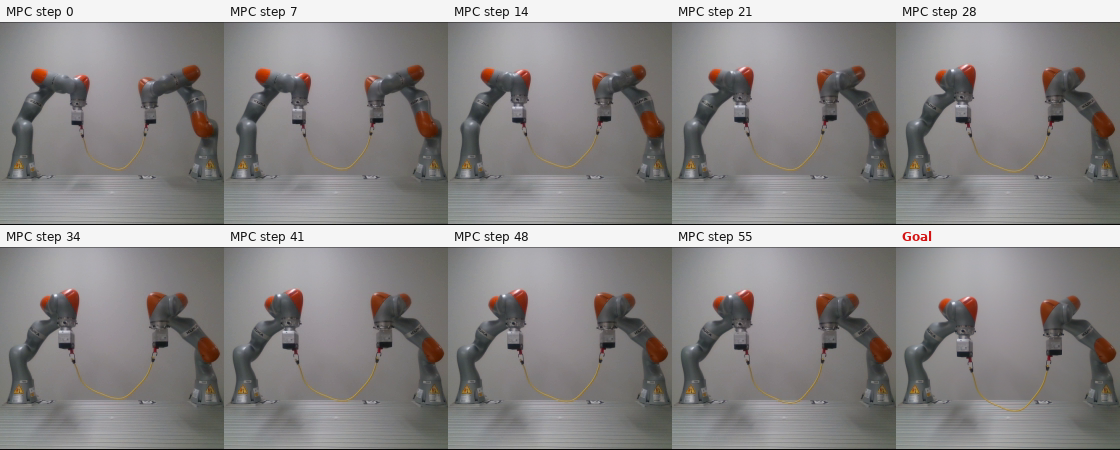}
    \vspace{-1.5em}
    \caption{\textbf{Real.} iLQR-based long-horizon planning on real hardware directly from pixels without state estimation. In this example, we demonstrate the arms moving forward to reach the goal image.}
    \label{fig:rope_real_nominal}
\end{figure}

\begin{figure}[H]
    \centering\vspace{-8pt}
    \includegraphics[width=\linewidth, trim={0.0cm 0.0cm 0.0cm 0.0cm}, clip]{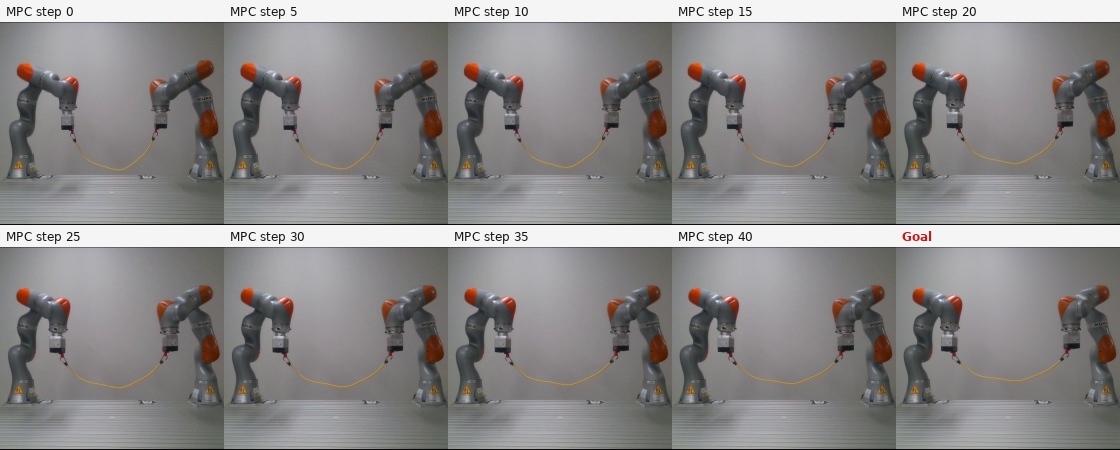}
    \vspace{-1.5em}
    \caption{\textbf{Real.} Another example of iLQR-based long-horizon planning on real hardware directly from pixels without state estimation. In this example, the bimanual setup has to stretch the rope to reach the goal image.}
    \label{fig:rope_real_2_nominal}
\end{figure}

\begin{figure}[H]
    \centering\vspace{-8pt}
    \includegraphics[width=\linewidth, trim={0.0cm 0.0cm 0.0cm 0.0cm}, clip]{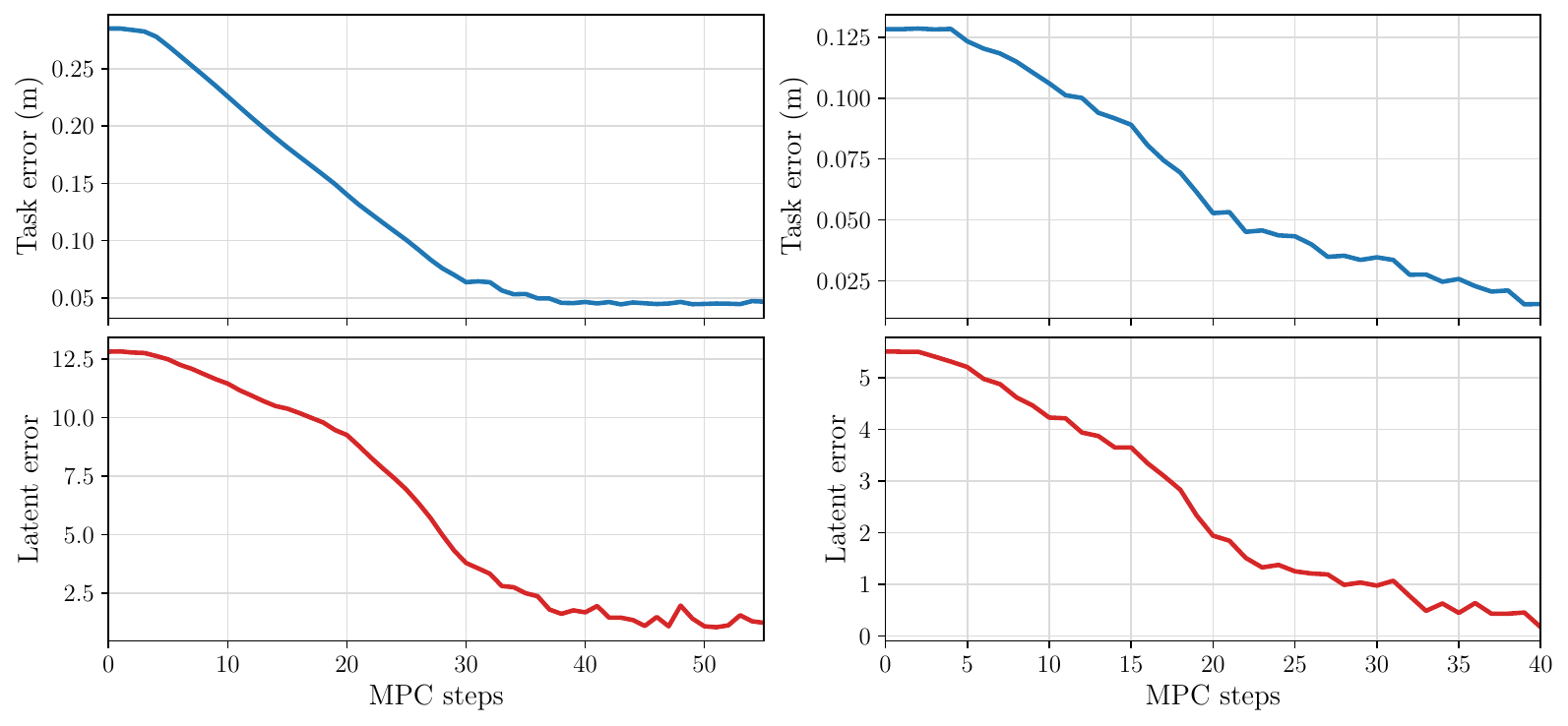}
    \vspace{-1.5em}
    \caption{\textbf{Real.} Task and latent space error plots demonstrating convergence to the desired goal state on hardware. Plot for Figure \ref{fig:rope_real_nominal} on the left and plot for Figure \ref{fig:rope_real_2_nominal} on the right.}
    \label{fig:rope_real_task_error}
\end{figure}

\subsection{Safe vs. Unsafe Time-lapses}
\begin{figure}[H]
    \centering\vspace{-8pt}
    \includegraphics[width=1.0\linewidth, trim={0.0cm 0.0cm 0.0cm 0.0cm}, clip]{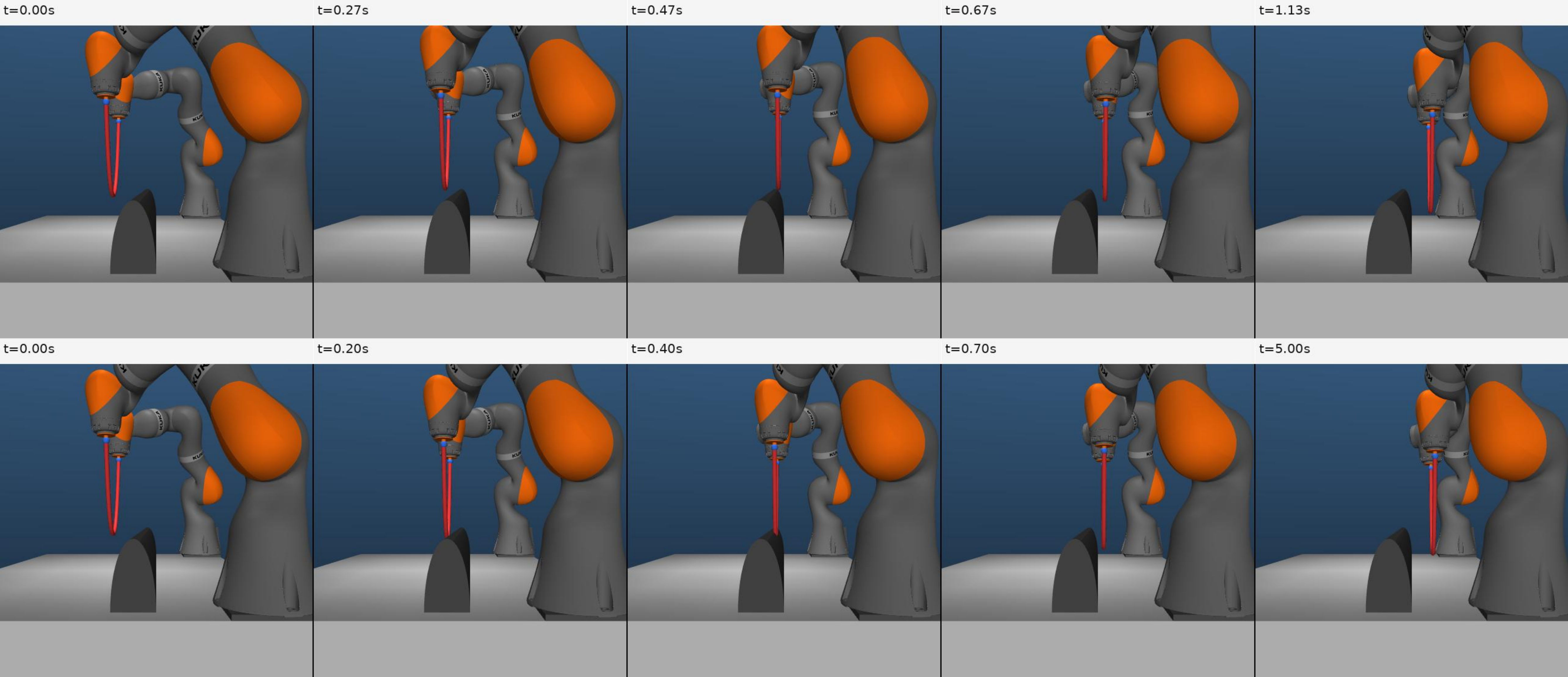}
    \vspace{-1.5em}
    \caption{\textbf{Rope.} Safe (top) plan vs. unsafe (bottom) plan time-lapse. The \method~planner successfully maneuvers the rope over the obstacle while reaching the goal image, unlike the nominal planner which collides with the obstacle.}
    \label{fig:rope_safe_unsafe}
\end{figure}
\vspace{-0.4cm}
\begin{figure}[H]
    \centering\vspace{-8pt}
    \includegraphics[width=1.0\linewidth, trim={0cm 0.0cm 0.0cm 0.0cm}, clip]{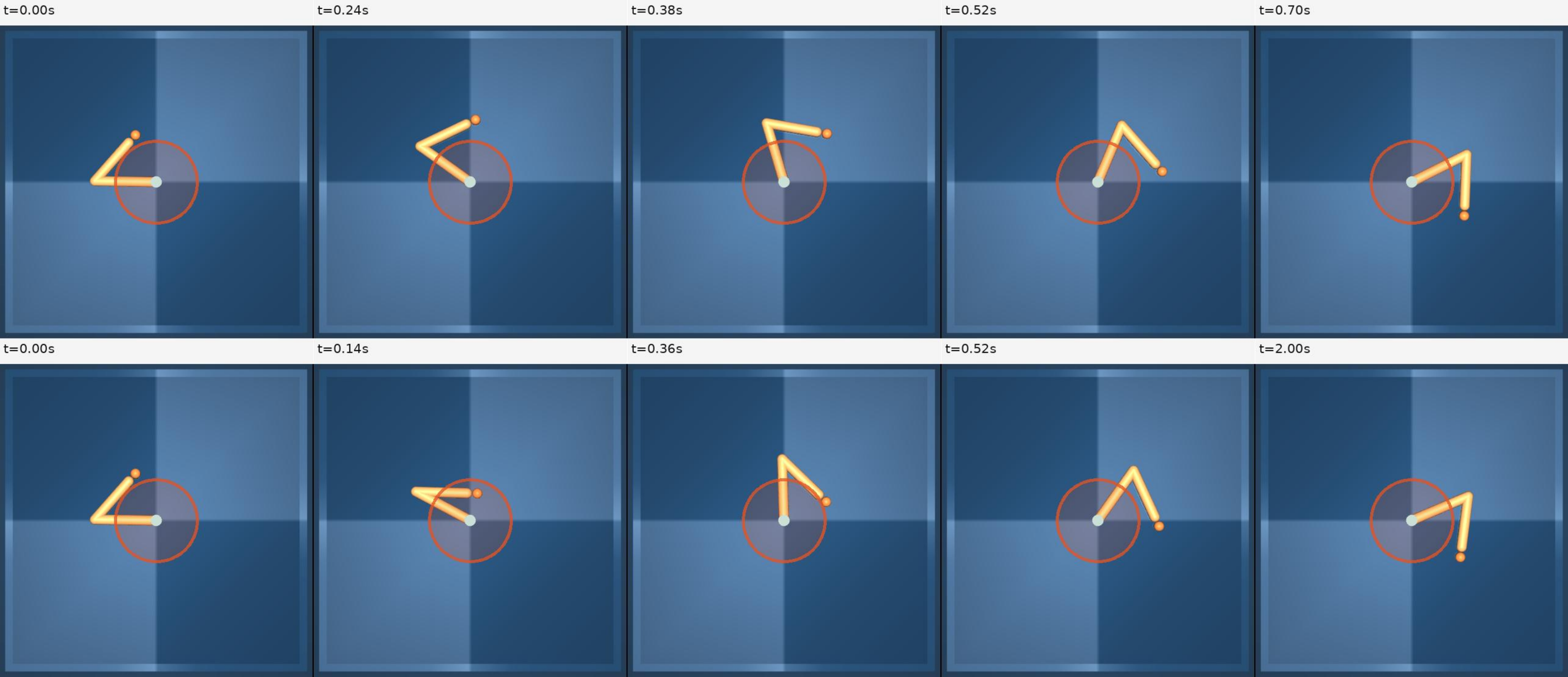}
    \vspace{-1.5em}
    \caption{\textbf{Reacher.} Safe (top) plan vs. unsafe (bottom) plan time-lapse. Similar to Figure \ref{fig:rope_safe_unsafe}, the nominal planner traverses through the ``unsafe" region and violates the pose constraint, while \method~avoids the unsafe region. }
    \label{fig:reacher_safe_unsafe}
\end{figure}
\vspace{-0.4cm}
\begin{figure}[H]
    \centering\vspace{-8pt}
    \includegraphics[width=1.0\linewidth, trim={0.0cm 0.0cm 0.0cm 0.0cm}, clip]{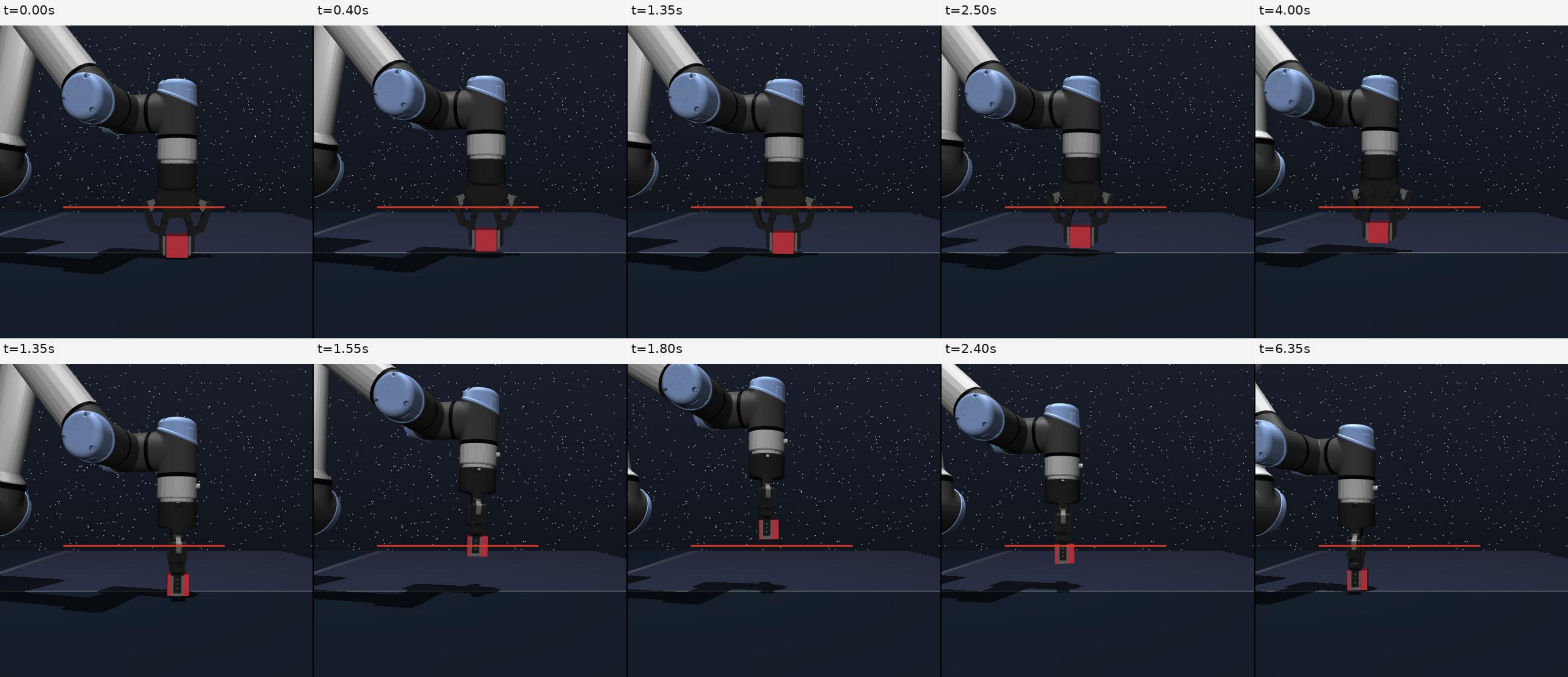}
    \vspace{-1.5em}
    \caption{\textbf{Cube.} Safe (top) plan vs. unsafe (bottom) plan time-lapse. \method~maintains the cube below the height threshold while reaching the goal.}
    \label{fig:cube_safe_unsafe}
\end{figure}

\subsection{Latent Reachability Tubes}
\begin{figure}[H]
    \centering\vspace{-8pt}
    \includegraphics[width=\linewidth, trim={0.0cm 0.0cm 0.0cm 0.0cm}, clip]{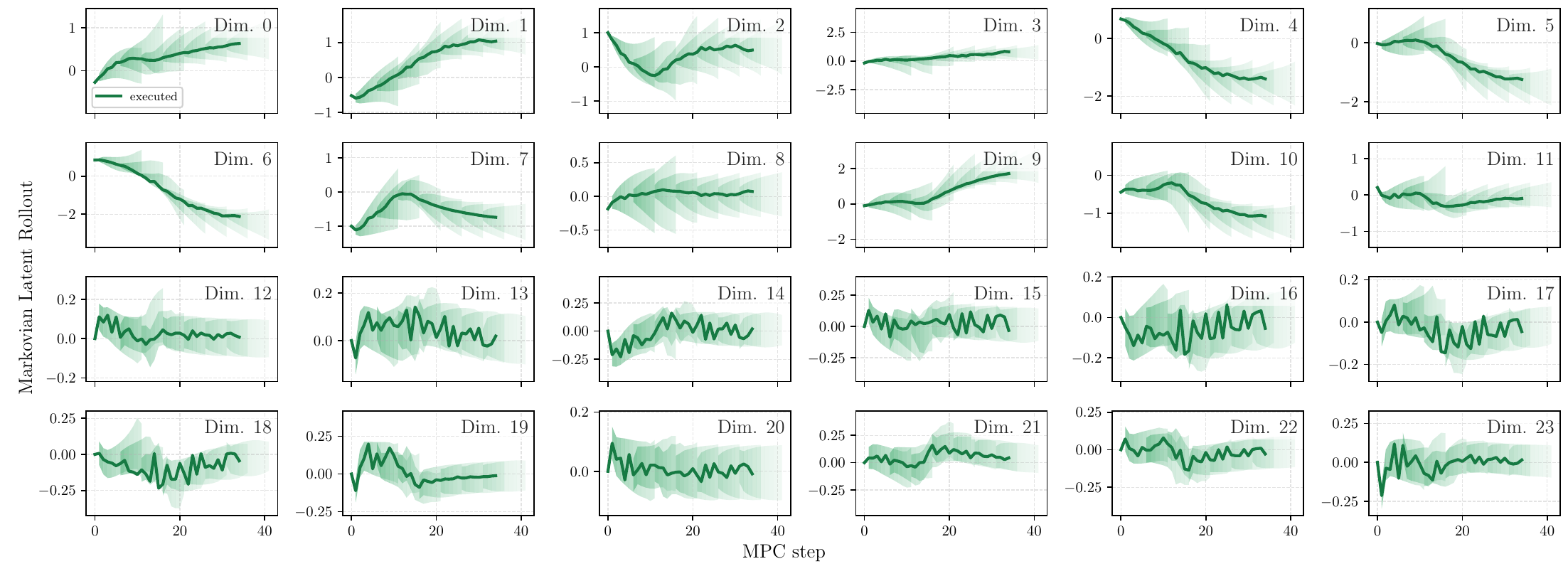}
    \vspace{-1.5em}
    \caption{\textbf{Rope.} Latent reachability tubes for the full Markov state computed via \method.}
    \label{fig:rope_tubes}
\end{figure}

\begin{figure}[H]
    \centering\vspace{-8pt}
    \includegraphics[width=\linewidth, trim={0.0cm 0.0cm 0.0cm 0.0cm}, clip]{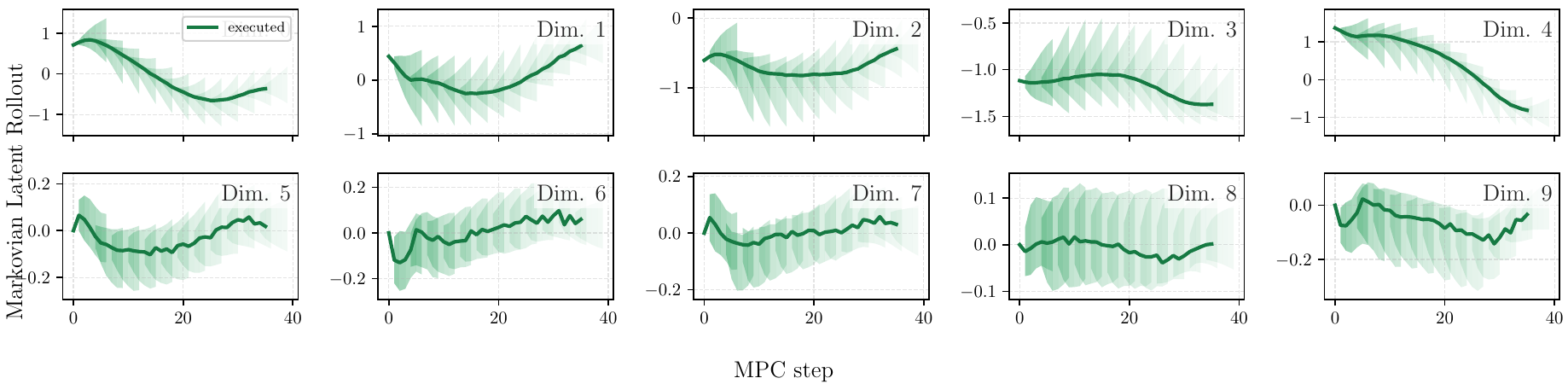}
    \vspace{-1.5em}
    \caption{\textbf{Reacher.} Latent reachability tubes for the full Markov state computed via \method.}
    \label{fig:reacher_tubes}
\end{figure}

\begin{figure}[H]
    \centering\vspace{-8pt}
    \includegraphics[width=\linewidth, trim={0.0cm 0.0cm 0.0cm 0.0cm}, clip]{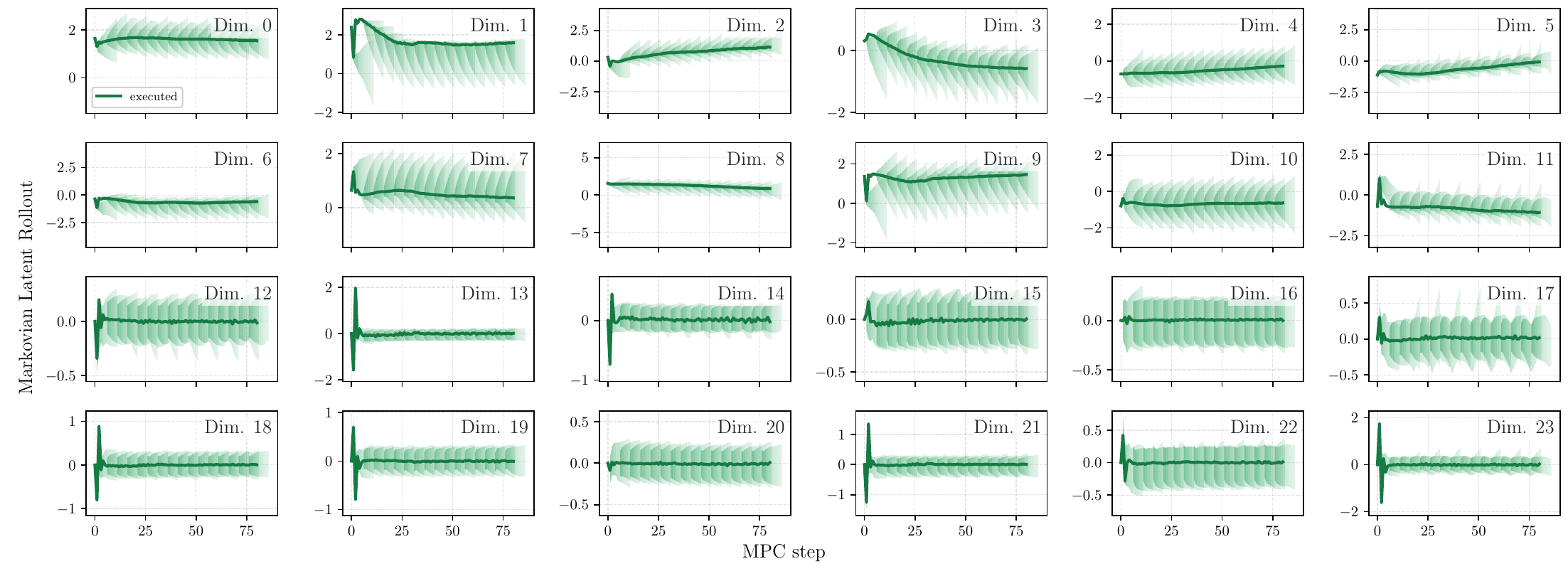}
    \vspace{-1.5em}
    \caption{\textbf{Cube.} Latent reachability tubes for the full Markov state computed via \method.}
    \label{fig:cube_tubes}
\end{figure}

\end{document}